\documentclass{article}

\usepackage{microtype}
\usepackage{graphicx}
\usepackage{subcaption}
\usepackage{booktabs} 
\usepackage{tabularx}
\usepackage{enumitem}

\usepackage{xurl} 
\usepackage{hyperref}

\usepackage[accepted]{icml2026}

\usepackage{amsmath}
\usepackage{amssymb}
\usepackage{mathtools}
\usepackage{amsthm}
\usepackage{longtable}
\usepackage{booktabs}
\usepackage{multirow}
\usepackage[most]{tcolorbox}

\usepackage[capitalize, noabbrev]{cleveref}

\theoremstyle{plain}
\newtheorem{theorem}{Theorem}[section]
\newtheorem{proposition}[theorem]{Proposition}
\newtheorem{lemma}[theorem]{Lemma}

\theoremstyle{definition}

\theoremstyle{remark}

\usepackage[disable, textsize=tiny]{todonotes}

\icmltitlerunning{GEM: Geometric Entropy Mixing for Optimal LLM Data Curation}

\begin{document}
\twocolumn[ \icmltitle{GEM: Geometric Entropy Mixing for Optimal LLM Data Curation}



    \icmlsetsymbol{equal}{*}

    \begin{icmlauthorlist}
        \icmlauthor{Yue Min}{equal,wizard,hkust} \icmlauthor{Ziyun Qiao}{equal,wizard,pku}
        \icmlauthor{Ruining Chen}{ustc} \icmlauthor{Yujun Li}{wizard}
    \end{icmlauthorlist}

    \icmlaffiliation{wizard}{Wizard Quant, Beijing, China} \icmlaffiliation{hkust}{The Hong Kong University of Science and Technology, Hong Kong SAR, China}
    \icmlaffiliation{pku}{Peking University, Beijing, China} \icmlaffiliation{ustc}{University of Science and Technology of China, Hefei, China}

    \icmlcorrespondingauthor{Yue Min}{minyue@wizardquant.com} \icmlcorrespondingauthor{Yujun Li}{liyujun@wizardquant.com}

    \icmlkeywords{Pretrain, Data mixing, Data curation}

    \vskip 0.3in ]



\printAffiliationsAndNotice{\icmlEqualContribution}

\begin{abstract}
    LLM pre-training efficacy increasingly depends on data composition rather
    than sheer volume. Yet, optimal mixing is hindered by categorization flaws:
    human taxonomies suffer from ontological misalignment, and Euclidean clustering
    fails to address embedding anisotropy. We introduce \textbf{GEM} (\textbf{G}eometric
    \textbf{E}ntropy \textbf{M}ixing), a framework reformulating data
    curation as a variational problem on the hypersphere augmented with a
    \textbf{mixing-balance regularizer}. By decoupling the generative prior
    and optimizing the objective via a provable \textbf{MM (Minorize-Maximize)}
    algorithm, GEM effectively counteracts the cluster collapse to discover
    balanced semantic structures invisible to Euclidean heuristics. We employ
    teacher-student distillation to scale this geometric fidelity to web-scale
    corpora and introduce the \textbf{Geometric Influence Score (GIS)} for
    interpretable taxonomy generation. Experiments with 1.1B-parameter
    models demonstrate that GEM establishes a new state-of-the-art when
    integrated into mixing strategies like DoReMi and RegMix, improving average
    downstream accuracy by up to \textbf{1.2\%} and offering a robust
    coordinate system for predictable data mixing.
\end{abstract}

\section{Introduction}

Data curation has emerged as the decisive factor in the performance of Large
Language Models (LLMs)~\cite{hoffmann2022training,gunasekar2023textbooks,penedo2023refinedweb},
shifting the research frontier from sheer parameter scaling to the strategic
"mixing" of heterogeneous data sources. As scaling laws~\cite{kaplan2020scaling}
evolve, the core challenge lies in partitioning massive-scale, unstructured
corpora into semantically distinct and balanced clusters, which is a prerequisite
for any principled data mixing strategy~\cite{ye2024data}. However, contemporary
approaches to data classification generally fall into two categories, both
of which face fundamental theoretical and practical bottlenecks.

\begin{figure}[t]
    \centering
    \includegraphics[width=\linewidth]{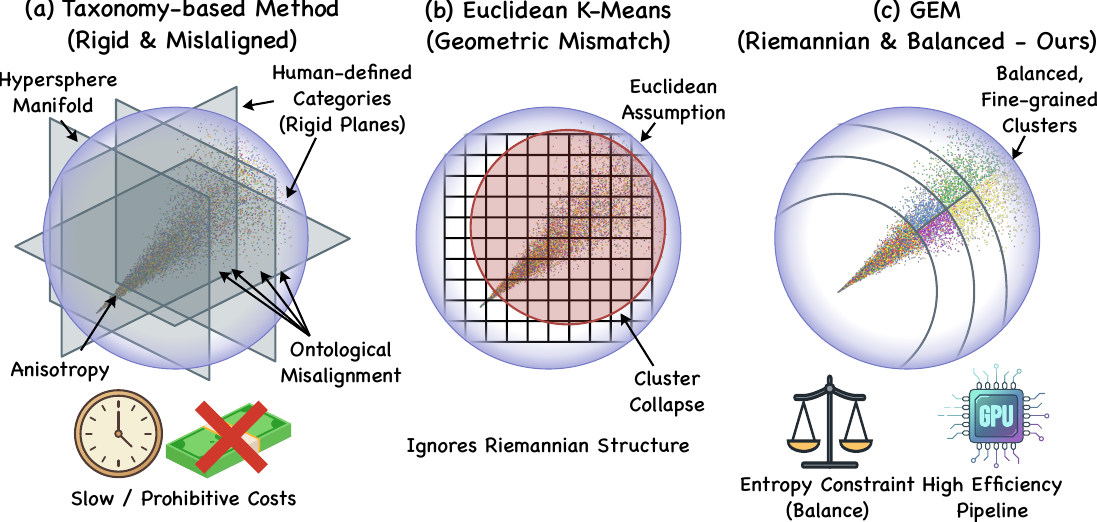}
    \caption{Schematic illustration of geometric mismatch in semantic
        clustering. While \textbf{(a)} taxonomy-based approaches are hindered by
        rigid misalignment and high costs, and \textbf{(b)} Euclidean clustering
        fails to handle embedding anisotropy leading to \textit{cluster collapse},
        \textbf{(c)} our proposed GEM framework utilizes MM-based inference to generate
        balanced, semantically distinct partitions on the hypersphere with
        superior efficiency.}
    \label{fig:comparison}
\end{figure}

The first category, taxonomy-based methods, relies on human-defined categorical
hierarchies~\cite{gpt3,llama2}. These approaches typically utilize high-capacity
LLMs or ensembles to assign labels to documents. However, as illustrated in
Figure~\ref{fig:comparison}(a), this paradigm suffers from a critical
ontological misalignment: human-centric categories often do not reflect the latent
semantic granularity required for self-supervised learning. Empirical
evidence suggests that even state-of-the-art models exhibit low inter-annotator
consistency when classifying nuanced web data, indicating that human taxonomies
fail to capture the true underlying distribution of model-relevant knowledge~\cite{maini2024rephrasing,semdedup}.
Furthermore, the cost of labeling renders this approach unsustainable,
especially given the dynamic nature of model development where data is
continuously updated, making the constant re-annotation of the corpus operationally
prohibitive.

Alternatively, unsupervised approaches like $K$-Means~\cite{kmeans} provide
a scalable option but are predicated on Euclidean geometry. This creates a fundamental
mismatch with modern neural embeddings (e.g., BGE~\cite{xiao2024c}, RoBERTa~\cite{liu2019roberta}),
which inherently reside on a high-dimensional hyperspherical manifold optimized
for cosine similarity. This geometric discrepancy is exacerbated by \textit{anisotropy},
the so-called ``cone effect''~\cite{li2020sentence}, where representations concentrate
in narrow, non-uniform sub-regions. Consequently, applying Euclidean
clustering to this Riemannian space precipitates ``cluster collapse,'' as illustrated
in Figure~\ref{fig:comparison}(b), where dominant clusters swallow the semantic
long-tail, severely limiting the diversity requisite for model
generalization~\cite{ethayarajh2019contextual,gao2021simcse}.

To bridge this gap, we introduce \textbf{GEM (Geometric Entropy Mixing)}, which
aligns semantic partitioning with the intrinsic Riemannian geometry of neural
representations. As conceptualized in Figure~\ref{fig:comparison}(c), GEM
departs from Euclidean heuristics by formulating the clustering task as an
entropy-regularized variational objective augmented with a mixing-balance
regularizer on the unit hypersphere. By explicitly decoupling the generative
prior and integrating balance regularizer on empirical mass into the von Mises-Fisher
Mixture Model (vMFMM), our method effectively mitigates embedding anisotropy
and prevents cluster collapse. This allows GEM to discover fine-grained semantic
structures and long-tail distributions that remain latent to traditional distance-based
methods, providing a more expressive semantic basis for data mixing. From a systems
perspective, GEM is architected for web-scale deployment through a Teacher-Student
distillation pipeline that achieves linear time complexity with respect to
corpus size. Furthermore, to bridge the gap between geometric clustering and
human-centric data curation, we introduce a Geometric Influence Score (GIS)-based
sampling method to generate an interpretable, fine-grained taxonomy with
descriptions for each semantic category. Extensive experiments demonstrate that
the data mixtures derived via GEM consistently yield superior scaling laws,
manifesting in lower validation perplexity and enhanced performance across
diverse downstream benchmarks compared to competitive baselines.

Our primary contributions are summarized as follows:
\begin{itemize}[topsep=0pt,itemsep=0pt,parsep=0pt,partopsep=0pt]
    \item \textbf{Geometric formulation with balance regularization.} We
          propose a hyperspherical variational framework with a novel mixing-balance
          regularizer to effectively prevent cluster collapse under embedding
          anisotropy.

    \item \textbf{Provable MM-based inference algorithm.} We derive a
          provable MM (Minorize-Maximize) algorithm that guarantees monotonic
          ascent, ensuring stable convergence for the regularized objective.

    \item \textbf{Scalable deployment with interpretability.} We enable
          linear-time inference via teacher--student distillation and introduce
          the Geometric Influence Score (GIS) for interpretable taxonomy generation.

    \item \textbf{Consistent gains in data mixing.} Experiments with 1.1B models
          demonstrate consistent performance gains over strong baselines across
          diverse benchmarks.
\end{itemize}

\begin{figure*}[t]
    \centering
    \includegraphics[width=0.95\linewidth]{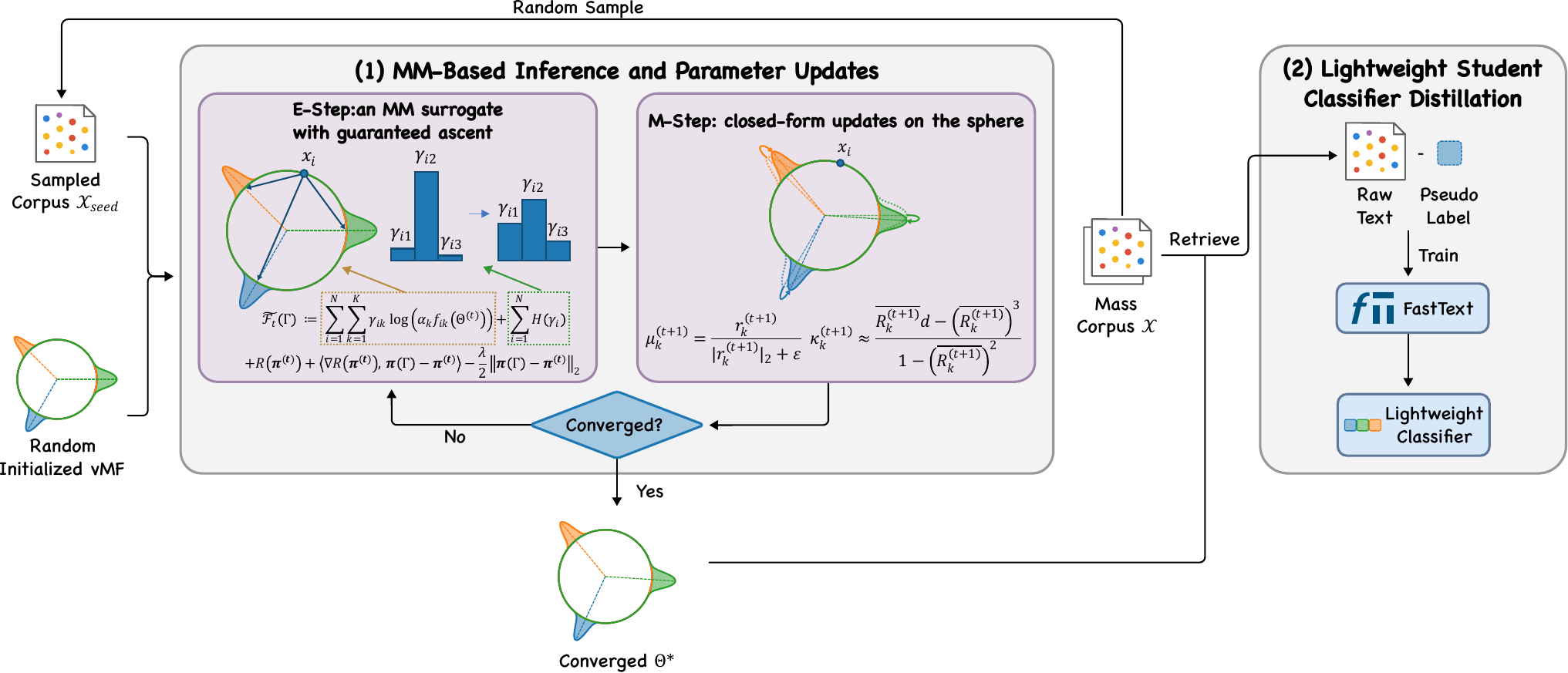}
    \caption{Schematic overview of the GEM framework. The pipeline consists of
        two phases: \textbf{(1) Geometric Optimization (Teacher):} We perform entropy-regularized
        clustering on the hypersphere using a Mixture of von Mises-Fisher (vMF) distributions.
        An Minorize–Maximize (MM) algorithm iteratively updates the Riemannian
        parameters ($\mu, \kappa$) on a seed corpus $\mathcal{X}_{seed}$ to
        discover semantic structures. \textbf{(2) Scalable Distillation (Student):}
        The converged geometric partitions are used to pseudo-label the mass
        corpus $\mathcal{X}$, guided by GIS score. These labels are then
        distilled into a lightweight FastText classifier, enabling efficient inference
        at scale.}
    \label{fig:framework}
\end{figure*}

\section{Related Work}


\noindent
\textbf{Data Selection and Mixing for LLMs.} Data mixing strategies are
pivotal for optimizing LLM training stability and generalization~\cite{gpt3,llama,internlm}.
Recent research has introduced adaptive reweighting frameworks. For instance,
\textbf{DoReMi}~\cite{doremi}, \textbf{DoGE}~\cite{fan2023doge}, Aioli~\cite{chen2024aioli}
and \textbf{RegMix}~\cite{regmix} utilize training signals such as gradient
alignment, excess loss or performance regression to dynamically adjust domain
weights. Extending this granularity, \textbf{SampleMix}~\cite{samplemix}
performs evaluation of each individual sample, while \textbf{TikMix}~\cite{tikmix}
dynamically recalibrates mixing weights based on data influence. Furthermore,
\textbf{QuadMix}~\cite{quadmix} introduces a unified objective to assess both
data quality and diversity. Nevertheless, these methods typically treat the underlying
categorization as an exogenous constant. Their effectiveness is
fundamentally upper-bounded by the quality of the initial partition. If the taxonomy
is semantically misaligned or noisy, even these sophisticated mixing
algorithms will struggle to isolate high-utility data features. To address this
limitation, we argue that refined structural granularity is a prerequisite
for effective mixing. We propose a geometry-aware classification scheme that
induces semantically coherent partitions from the latent space, enabling
robust mixing on high-entropy web data.


\noindent
\textbf{Pretraining Data Categorization.} Several recent works have
explicitly addressed the problem of categorizing large-scale pre-training
data, which can be broadly divided into taxonomy-based and unsupervised methods.
\textbf{Taxonomy-based approaches} rely on predefined label systems and assign
categories using supervised classifiers or LLMs. Systems such as
WebOrganizer and TnT-LLM employ LLM-based pipelines to annotate web
documents into manually designed taxonomies~\cite{tnt,weborganizer}. While these
methods yield human-interpretable labels, they suffer from two limitations: (i)
the imposed taxonomies reflect human-defined ontologies rather than the latent
semantic structure learned by the model, leading to potential ontological
misalignment; and (ii) large-scale inference with LLMs incurs substantial computational
cost, limiting scalability. \textbf{Unsupervised categorization} methods
avoid manual labels and instead cluster representations produced by
pretrained encoders. Typical techniques include K-Means or density-based clustering
algorithms such as HDBSCAN~\cite{kmeans,hdbscan}, as adopted in systems like
NVIDIA Climb~\cite{climb}. Although scalable and label-free, these
approaches generally operate in Euclidean space and rely on distance-based
objectives. In high-dimensional embedding spaces, however, distances tend to
concentrate, making Euclidean proximity a weak proxy for semantic similarity.




\section{Methodology}
\label{sec:method}

We introduce \textbf{GEM (Geometric Entropy Mixing)}, a spherical mixture
modeling framework for unsupervised semantic partitioning of web-scale text
embeddings. GEM is built on directional statistics on the unit hypersphere $\mathcal{S}
    ^{d-1}$ and optimizes an entropy-regularized variational objective augmented
with an explicit mixing-balance regularizer to mitigate cluster collapse induced
by embedding anisotropy. Figure~\ref{fig:framework} gives an overview. Below,
we describe (i) the geometric problem setup, (ii) an entropy-aware vMF mixture
formulation with a balance regularizer, (iii) a scalable \emph{MM-based} (minorize--maximize)
inference scheme with provable monotonic ascent, (iv) an interpretable taxonomy
generation pipeline leveraging Geometric Influence Scores (GIS), and (v) a
teacher-student distillation framework for efficient deployment on trillion-token
corpora.

\subsection{Problem Reformulation}
\label{subsec:reformulation}

We consider unsupervised semantic partitioning for a massive corpus of normalized
text embeddings $\mathcal{X}=\{x_{i}\}_{i=1}^{N}\subset \mathbb{R}^{d}$,
where each $x_{i}$ is $\ell_{2}$-normalized and thus lies on the unit
hypersphere: $x_{i}\in \mathcal{S}^{d-1}\coloneqq \{x\in\mathbb{R}^{d}:\|x\|_{2}
    =1\}$. Our goal is to learn a partition $\mathcal{C}=\{C_{1},\dots,C_{K}\}$
such that clusters are distinguishable by \emph{semantic directionality},
yielding a robust semantic basis for downstream data mixing in LLM pre-training.
This formulation assumes that the directional geometry of the text embedder
provides a useful proxy for the downstream LLM's data geometry. We do not require
an exact equivalence between the two spaces; rather, the embedder supplies a
stable semantic coordinate system whose utility is empirically validated by
data-mixing predictability and downstream pre-training results.

\noindent
\textbf{Motivation: concentration on high-dimensional spheres.} A classical concentration
phenomenon suggests that Euclidean proximity becomes less informative in high
dimensions; on $\mathcal{S}^{d-1}$, angles between random directions concentrate
near $\pi/2$.
\begin{lemma}[Concentration on hyperspheres
            {\citep{ledoux2001concentration}}]
    \label{lem:sphere_concentration} Let $x \sim \mathrm{Unif}(\mathcal{S}^{d-1}
        )$. For any fixed $p \in \mathcal{S}^{d-1}$ and any $\epsilon>0$,
    \begin{equation}
        \mathbb{P}\!\left(\left|\langle x,p\rangle\right|\le \epsilon\right)
        ~\ge~ 1-2\exp\!\left(-\frac{d\epsilon^{2}}{2}\right).
    \end{equation}
\end{lemma}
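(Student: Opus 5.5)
By rotation invariance of the uniform measure on $\mathcal{S}^{d-1}$, the law of $\langle x,p\rangle$ is the same for every fixed $p\in\mathcal{S}^{d-1}$. We may therefore take $p=e_1$ and study the single coordinate $x_1$. It suffices to prove the one-sided tail bound $\mathbb{P}(x_1>\epsilon)\le \exp(-d\epsilon^2/2)$. Symmetry $x\mapsto -x$ gives the same bound for $x_1<-\epsilon$, and a union bound then yields the stated inequality. We may also assume $\epsilon<1$, since for $\epsilon\ge 1$ the event $|x_1|\le\epsilon$ has probability one.

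The cleanest route to the one-sided bound is the classical spherical-cap estimate. The set $\{x\in\mathcal{S}^{d-1}: x_1>\epsilon\}$ is a cap $C_\epsilon$. Consider the cone $\{tx: t\in[0,1],\,x\in C_\epsilon\}$ in the unit ball $B^d$. By polar coordinates, the normalized surface measure of $C_\epsilon$ equals the ratio of the cone's volume to $\mathrm{vol}(B^d)$.

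Every point $tx$ of the cone lies in a ball of radius $\sqrt{1-\epsilon^2}$. Concretely, we show the cone is contained in the ball centred at $\epsilon e_1$ with that radius. For $x$ in the cap, $\|x-\epsilon e_1\|^2=1-2\epsilon x_1+\epsilon^2\le 1-\epsilon^2$. For the scaled points $tx$ with $t\in[0,1]$, a short convexity argument handles the portion near the origin: split according to whether $t x_1\ge\epsilon$ or not, and bound the second part by the ball of radius $\sqrt{1-\epsilon^2}$ centred at the origin intersected with the cone. Granting this containment, the ratio of volumes is at most $(1-\epsilon^2)^{d/2}$. Combining with $1-u\le e^{-u}$ gives $\mathbb{P}(x_1>\epsilon)\le (1-\epsilon^2)^{d/2}\le e^{-d\epsilon^2/2}$, which is exactly the needed constant.

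The main obstacle is getting the containment to produce exactly the constant $1/2$ in the exponent, with no stray prefactor. The naive cone containment sometimes only gives the bound for $\epsilon\ge$ some threshold, or yields $(1-\epsilon^2)^{(d-1)/2}$. If the geometric step proves delicate, I would fall back on the density of $x_1$, which is proportional to $(1-s^2)^{(d-3)/2}$ on $[-1,1]$. I would bound the tail integral against the normalizing constant using $\int_{-1}^{1}(1-s^2)^{(d-3)/2}ds\gtrsim \sqrt{2\pi/d}$. Alternatively, I would represent $x=g/\|g\|$ with $g$ standard Gaussian and use a Chernoff bound on $g_1^2-\epsilon^2\|g\|^2$. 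The Chernoff route computes $\mathbb{E}\exp(\lambda(g_1^2(1-\epsilon^2)-\epsilon^2\sum_{j\ge2}g_j^2))$ explicitly and optimizes $\lambda$; it gives a two-sided bound of the form $\exp(-d\,h(\epsilon))$ with $h(\epsilon)\ge\epsilon^2/2$ for $d$ not too small. Small $d$ is handled trivially, since $1-2e^{-d\epsilon^2/2}\le 0$ when $d\epsilon^2\le 2\ln 2$. Either route recovers the lemma as stated, matching \citep{ledoux2001concentration}.
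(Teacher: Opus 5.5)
The paper gives no proof of this lemma (it only cites it), so your argument has to stand on its own. The cone-over-the-cap route is the right one, but the central containment is false for part of the range. The cone $\{tx: t\in[0,1],\,x\in C_\epsilon\}$ contains both $0$ and $C_\epsilon$, and it is contained in $\mathrm{conv}(\{0\}\cup C_\epsilon)$. So it lies in the ball $B(\epsilon e_1,\sqrt{1-\epsilon^2})$ if and only if that ball contains $C_\epsilon$ (your computation) and the origin. The origin is at distance $\epsilon$ from the centre, so this needs $\epsilon\le\sqrt{1-\epsilon^2}$, i.e.\ $\epsilon\le 1/\sqrt{2}$.

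For $\epsilon\in(1/\sqrt{2},1)$ the containment fails. Your remark about the threshold has the direction reversed. The split you propose does not repair it. The piece of the cone with $tx_1<\epsilon$ is not inside the origin-centred ball of radius $\sqrt{1-\epsilon^2}$: take $x_1=\epsilon+\delta$ and $t$ just below $\epsilon/(\epsilon+\delta)$, so that $\|tx\|=t\approx 1$. Even if both pieces were controlled, adding two volume bounds would reintroduce a prefactor larger than one.

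The missing idea, which is how Ball's proof of the cap estimate proceeds, is a second ball for large $\epsilon$. For $\epsilon>1/\sqrt{2}$, use the ball of radius $\tfrac{1}{2\epsilon}$ centred at $\tfrac{1}{2\epsilon}e_1$.
\begin{itemize}
\item It contains the origin, on its boundary.
\item It contains the cap, since $\|x-\tfrac{1}{2\epsilon}e_1\|^2=1-x_1/\epsilon+\tfrac{1}{4\epsilon^2}\le\tfrac{1}{4\epsilon^2}$ for $x_1\ge\epsilon$. Hence $\mathbb{P}(x_1>\epsilon)\le(2\epsilon)^{-d}$.
\item Finally, $(2\epsilon)^{-1}\le e^{-\epsilon^2/2}$ on $[1/\sqrt{2},1]$. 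Indeed $\phi(\epsilon)=\ln(2\epsilon)-\epsilon^2/2$ has $\phi'(\epsilon)=1/\epsilon-\epsilon\ge 0$ there, and $\phi(1/\sqrt{2})=\tfrac12\ln 2-\tfrac14>0$.
\end{itemize}

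Your fallbacks do not close the gap as written. The Gaussian Chernoff route, optimized exactly (for $d\epsilon^2>1$), gives $\mathbb{P}(|x_1|>\epsilon)\le\sqrt{d\epsilon^2}\,\bigl(\tfrac{d(1-\epsilon^2)}{d-1}\bigr)^{(d-1)/2}$. With $d\epsilon^2=c$ fixed and $d\to\infty$, this tends to $\sqrt{ec}\,e^{-c/2}$. That exceeds the target $2e^{-c/2}$ as soon as $c>4/e$, whereas your trivial regime only covers $c\le 2\ln 2<4/e$. So the claim $h(\epsilon)\ge\epsilon^2/2$ ``for $d$ not too small'' fails precisely for large $d$, because Chernoff loses a $\sqrt{d\epsilon^2}$ factor.

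The density route is only sketched. The $\gtrsim$ on the normalising constant $B(\tfrac12,\tfrac{d-1}{2})$ is exactly where the constant would be lost, and the proposal supplies no sharp bounds on it or on the tail integral.

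Once the second-ball case is added, the rest of your argument is fine: rotation invariance, the reduction to $\epsilon<1$, the convexity argument for $\epsilon\le 1/\sqrt{2}$, $1-u\le e^{-u}$, symmetry, and the union bound.
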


\textit{Remark.} Lemma~\ref{lem:sphere_concentration} provides an intuition
that, for $d\gg 1$, random directions are nearly orthogonal. While real neural
embeddings are not uniform on $\mathcal{S}^{d-1}$, they often exhibit strong
anisotropy and ``hubness'', making purely Euclidean clustering unstable.
This motivates modeling directional coherence using spherical distributions
whose sufficient statistic is cosine similarity.

\noindent
\textbf{Variational learning objective.} We seek directional parameters $\Theta$
and soft assignments $\Gamma=\{\gamma_{ik}\}$ that fit a spherical mixture model
while explicitly encouraging \emph{balanced} cluster masses to improve the
diversity of induced data mixtures. Let the empirical (soft) cluster mass be
\begin{align}
    \pi_{k}(\Gamma) \;\coloneqq\; \frac{1}{N}\sum_{i=1}^{N}\gamma_{ik}, \quad & \boldsymbol{\pi}(\Gamma)\in\Delta^{K-1}, \notag \\
    \text{where}\quad                                                         & \mathbf{u}\coloneqq \tfrac{1}{K}\mathbf{1}.
\end{align}
We optimize an entropy-regularized variational lower bound (ELBO) augmented with
a mixing-balance regularizer:
\begin{equation}
    \label{eq:gem_elbo_objective}
    \begin{split}
        \max_{\Theta,\Gamma}\; & \underbrace{\sum_{i=1}^N\sum_{k=1}^K \gamma_{ik}\log\!\big(\alpha_k
            f_{ik}(\Theta)\big) + \sum_{i=1}^N H(\gamma_i)}_{\text{Geometric
        Fidelity (ELBO)}}                                                                                           \\
                               & -\frac{\lambda}{2}\underbrace{\big\|\boldsymbol{\pi}(\Gamma)-\mathbf{u}\big\|_2^2}
        _{\text{Mixing-balance}}, \qquad \lambda>0.
    \end{split}
\end{equation}
where $H(\gamma_{i})\coloneqq -\sum_{k=1}^{K}\gamma_{ik}\log\gamma_{ik}$ and
$f_{ik}(\Theta)\coloneqq f_{\mathrm{vMF}}(x_{i}\mid\mu_{k},\kappa_{k})$ (defined
in Sec.~\ref{subsec:mixture}). Eq.~\eqref{eq:gem_elbo_objective} is a tight
lower bound of the marginal log-likelihood
$\sum_{i=1}^{N}\log\sum_{k=1}^{K}\alpha_{k}f_{ik}(\Theta)$, and becomes exact
when $\gamma_{i}$ equals the posterior responsibilities. The balance term
penalizes deviation from uniform usage of clusters, mitigating degenerate solutions
and stabilizing partitions under anisotropic embedding distributions.

\subsection{Entropy-Aware Directional Mixture Modeling}
\label{subsec:mixture}

\noindent
\textbf{von Mises--Fisher (vMF) components.} We instantiate the directional likelihood
via a mixture of von Mises--Fisher (movMF) distributions~\citep{banerjee2005clustering},
the canonical directional family on $\mathcal{S}^{d-1}$. A component $k$ has
mean direction $\mu_{k}\in\mathcal{S}^{d-1}$ and concentration $\kappa_{k}\ge
    0$:
\begin{equation}
    \begin{aligned}
        f_{\mathrm{vMF}}(x \mid \mu_{k}, \kappa_{k}) & = C_{d}(\kappa_{k}) \exp(\kappa_{k}\mu_{k}^{\top}x),         \\
        C_{d}(\kappa_{k})                            & = \frac{\kappa_k^{d/2-1}}{(2\pi)^{d/2} I_{d/2-1}(\kappa_k)},
    \end{aligned}
\end{equation}
where $I_{\nu}(\cdot)$ is the modified Bessel function of the first kind. The
sufficient statistic $\mu_{k}^{\top}x$ is exactly cosine similarity,
aligning the generative likelihood with modern embedding metrics.

\noindent
\textbf{Decoupling the generative prior from the empirical mass.} To avoid the
``rich-get-richer'' feedback in standard EM due to learned mixture weights,
we fix the generative prior $\alpha_{k}\equiv 1/K$ for all $k$. The resulting
marginal likelihood is
\begin{equation}
    \label{eq:likelihood_uniform_prior}P(x_{i}\mid\Theta) \;=\; \sum_{k=1}^{K}
    \alpha_{k}f_{\mathrm{vMF}}(x_{i}\mid \mu_{k},\kappa_{k}), \qquad \alpha_{k}
    =\frac{1}{K}.
\end{equation}
Importantly, the balance regularizer in Eq.~\eqref{eq:gem_elbo_objective}
acts on the empirical assignment mass $\boldsymbol{\pi}(\Gamma)$, not on the
generative prior $\boldsymbol{\alpha}$.

\begin{proposition}[Concavity, smoothness, and gradient form of the mixing-balance
        regularizer]
    \label{prop:regularizer_properties} Let $\mathbf{u}\coloneqq \tfrac{1}{K}
        \mathbf{1}$ and define
    \begin{equation}
        \label{eq:regularizer_def}R(\boldsymbol{\pi}) \;\coloneqq\; -\frac{\lambda}{2}
        \big\|\boldsymbol{\pi}-\mathbf{u}\big\|_{2}^{2}, \qquad \lambda>0,
    \end{equation}
    for $\boldsymbol{\pi}\in\Delta^{K-1}$. Then $R$ is (i) concave on
    $\mathbb{R}^{K}$ (hence also on $\Delta^{K-1}$), (ii) differentiable with
    gradient
    \begin{equation}
        \label{eq:regularizer_grad}\nabla_{\pi_k}R(\boldsymbol{\pi}) \;=\; -\lambda
        (\pi_{k}-u_{k}), \qquad u_{k}=\tfrac{1}{K},
    \end{equation}
    and (iii) $\lambda$-smooth with respect to the Euclidean norm, i.e., its
    gradient is $\lambda$-Lipschitz:
    \begin{equation}
        \label{eq:regularizer_smooth}\big\|\nabla R(\boldsymbol{\pi})-\nabla
        R(\boldsymbol{\pi}')\big\|_{2}\;\le\; \lambda\big\|\boldsymbol{\pi}-
        \boldsymbol{\pi}'\big\|_{2}, \qquad \forall\,\boldsymbol{\pi},\boldsymbol
        {\pi}'\in\mathbb{R}^{K}.
    \end{equation}
\end{proposition}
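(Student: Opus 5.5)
The plan is to treat $R$ as a quadratic function on all of $\mathbb{R}^{K}$ and read off its properties from its gradient and Hessian. Expanding the norm gives
\begin{equation*}
R(\boldsymbol{\pi}) = -\frac{\lambda}{2}\sum_{k=1}^{K}(\pi_k-u_k)^2 ,
\end{equation*}
which is a polynomial in the coordinates $\pi_k$. It is therefore $C^\infty$ on $\mathbb{R}^{K}$, and this settles differentiability in (ii).

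For the gradient in (ii), I differentiate coordinatewise. Only the $k$-th summand depends on $\pi_k$, and its derivative is $-\frac{\lambda}{2}\cdot 2(\pi_k-u_k)=-\lambda(\pi_k-u_k)$. This is exactly Eq.~\eqref{eq:regularizer_grad}; in vector form, $\nabla R(\boldsymbol{\pi})=-\lambda(\boldsymbol{\pi}-\mathbf{u})$.

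For concavity in (i), I differentiate once more and obtain the constant Hessian $\nabla^2 R=-\lambda I_K$. Since $\lambda>0$, this matrix is negative definite, so $R$ is (strictly, indeed $\lambda$-strongly) concave on $\mathbb{R}^{K}$. The same can be checked without the Hessian: for $t\in[0,1]$,
\begin{equation*}
tR(\boldsymbol{\pi})+(1-t)R(\boldsymbol{\pi}')-R\big(t\boldsymbol{\pi}+(1-t)\boldsymbol{\pi}'\big)=-\frac{\lambda}{2}t(1-t)\|\boldsymbol{\pi}-\boldsymbol{\pi}'\|_2^2\le 0 ,
\end{equation*}
which is the concavity inequality. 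The simplex $\Delta^{K-1}$ is a convex subset of $\mathbb{R}^{K}$, and the restriction of a concave function to a convex set remains concave. This gives the "hence also on $\Delta^{K-1}$" part.

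For smoothness in (iii), I subtract the two gradients directly. The terms in $\mathbf{u}$ cancel:
\begin{equation*}
\nabla R(\boldsymbol{\pi})-\nabla R(\boldsymbol{\pi}')=-\lambda(\boldsymbol{\pi}-\boldsymbol{\pi}') .
\end{equation*}
Taking Euclidean norms gives equality with $\lambda\|\boldsymbol{\pi}-\boldsymbol{\pi}'\|_2$, which is Eq.~\eqref{eq:regularizer_smooth}. Equivalently, the operator norm of the Hessian is $\lambda$.

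I do not expect any real obstacle. The one point that needs care is scope: the statement places $\boldsymbol{\pi}$ in $\Delta^{K-1}$ but claims (i) and (iii) on all of $\mathbb{R}^{K}$. I will therefore carry out every computation on $\mathbb{R}^{K}$ and only then restrict to the simplex. This avoids having to talk about tangent-space or relative gradients on $\Delta^{K-1}$.
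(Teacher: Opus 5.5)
Your proposal is correct and follows the paper's proof essentially step for step: expand the quadratic, read off $\nabla R(\boldsymbol{\pi})=-\lambda(\boldsymbol{\pi}-\mathbf{u})$, use the constant Hessian $-\lambda I_K$ for concavity, and observe that the gradient difference is exactly $-\lambda(\boldsymbol{\pi}-\boldsymbol{\pi}')$ to get the Lipschitz bound with equality. Your direct two-point check of the concavity inequality and your explicit remark about restricting to the convex set $\Delta^{K-1}$ go slightly beyond the paper's write-up, but they do not change the argument.
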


\begin{proof}
    Expanding Eq.~\eqref{eq:regularizer_def} gives $R(\boldsymbol{\pi}) = -\tfrac
        {\lambda}{2}\sum_{k=1}^{K}(\pi_{k}-u_{k})^{2}$. Differentiating w.r.t.\ $\pi
        _{k}$ yields Eq.~\eqref{eq:regularizer_grad}. Moreover, $R$ is a negative
    quadratic function with constant Hessian $\nabla^{2}R(\boldsymbol{\pi}) =
        -\lambda I_{K}\preceq 0$, which implies concavity on $\mathbb{R}^{K}$. Finally,
    $\nabla R(\boldsymbol{\pi})=-\lambda(\boldsymbol{\pi}-\mathbf{u})$ is an
    affine map, hence $\|\nabla R(\boldsymbol{\pi})-\nabla R(\boldsymbol{\pi}
        ')\|_{2}= \lambda\|\boldsymbol{\pi}-\boldsymbol{\pi}'\|_{2}$,
    establishing $\lambda$-smoothness.
\end{proof}

\subsection{MM-Based Inference and Parameter Updates}
\label{subsec:mm_em}

Directly maximizing Eq.~\eqref{eq:gem_elbo_objective} is nontrivial because
$R(\boldsymbol{\pi}(\Gamma))$ couples all samples through the global mass
$\boldsymbol{\pi}(\Gamma)$. We therefore derive a scalable minorize--maximize
(MM) update that yields a provable monotonic ascent of the objective.

\noindent
\textbf{E-step: an MM surrogate with guaranteed ascent.} Fix $\Theta=\Theta^{(t)}$
and denote $\boldsymbol{\pi}^{(t)}\coloneqq \boldsymbol{\pi}(\Gamma^{(t)})$.
Since $R$ is concave and $\lambda$-smooth, it admits the following global quadratic
minorizer: for any $\boldsymbol{\pi},\boldsymbol{\pi}'\in\Delta^{K-1}$,
\begin{equation}
    \label{eq:concave_smooth_minorizer}R(\boldsymbol{\pi}) \;\ge\; R(\boldsymbol
    {\pi}') + \big\langle \nabla R(\boldsymbol{\pi}'),\, \boldsymbol{\pi}-\boldsymbol
    {\pi}'\big\rangle -\frac{\lambda}{2}\big\|\boldsymbol{\pi}-\boldsymbol{\pi}
    '\big\|_{2}^{2}.
\end{equation}
Applying Eq.~\eqref{eq:concave_smooth_minorizer} with $\boldsymbol{\pi}'=\boldsymbol
    {\pi}^{(t)}$ and $\boldsymbol{\pi}=\boldsymbol{\pi}(\Gamma)$ yields a tight lower
bound of the regularizer at $\Gamma^{(t)}$. Consequently, we define the MM surrogate
(a global lower bound that matches the objective at $\Gamma^{(t)}$):
\begin{multline}
    \label{eq:mm_surrogate}\widetilde{\mathcal{F}}_{t}(\Gamma) \;\coloneqq\;
    \sum_{i=1}^{N}\sum_{k=1}^{K}\gamma_{ik}\log\!\big(\alpha_{k}f_{ik}(\Theta
    ^{(t)})\big) + \sum_{i=1}^{N}H(\gamma_{i}) \\
    + R(\boldsymbol{\pi}^{(t)}) + \big\langle \nabla R(\boldsymbol{\pi}^{(t)}
    ),\, \boldsymbol{\pi}(\Gamma)-\boldsymbol{\pi}^{(t)}\big\rangle - \frac{\lambda}{2}
    \big\|\boldsymbol{\pi}(\Gamma)-\boldsymbol{\pi}^{(t)}\big\|_{2}^{2}.
\end{multline}
We then update assignments by maximizing this surrogate:
\begin{equation}
    \label{eq:mm_estep}\Gamma^{(t+1)}\;\in\; \arg\max_{\Gamma:\;\gamma_i\in\Delta^{K-1}}
    \;\widetilde{\mathcal{F}}_{t}(\Gamma).
\end{equation}

\textit{Monotonicity guarantee.} By construction,
$\widetilde{\mathcal{F}}_{t}(\Gamma)\le \mathcal{F}(\Theta^{(t)},\Gamma)$
for all $\Gamma$, and $\widetilde{\mathcal{F}}_{t}(\Gamma^{(t)})=\mathcal{F}(
    \Theta^{(t)},\Gamma^{(t)})$. Therefore, maximizing the surrogate yields a monotonic
ascent:
\begin{equation}
    \label{eq:mm_monotone}\mathcal{F}(\Theta^{(t)},\Gamma^{(t+1)}) \;\ge\; \mathcal{F}
    (\Theta^{(t)},\Gamma^{(t)}).
\end{equation}
A complete monotonic convergence statement for the full GEM iterations is
provided in Appendix~\ref{app:monotone_gem}.

\noindent
\textbf{Practical solver for the MM E-step.} The surrogate in Eq.~\eqref{eq:mm_surrogate}
is a concave objective over $\{\gamma_{i}\in\Delta^{K-1}\}$ (entropy plus a
concave quadratic in $\boldsymbol{\pi}(\Gamma)$), and can be efficiently
optimized with a few steps of projected (mirror) ascent. In practice, we
solve Eq.~\eqref{eq:mm_estep} approximately to a prescribed tolerance; any update
that increases $\widetilde{\mathcal{F}}_{t}$ preserves the ascent property in
Eq.~\eqref{eq:mm_monotone}.

\noindent
\textbf{M-step: closed-form updates on the sphere.} Given $\Gamma^{(t+1)}$, update
the empirical mass:
\begin{equation}
    \pi_{k}^{(t+1)}= \frac{1}{N}\sum_{i=1}^{N}\gamma_{ik}^{(t+1)}.
\end{equation}
For mean directions, the vMF maximum-likelihood update normalizes the weighted
resultant vector:
\begin{equation}
    \label{eq:mu_update_new}r_{k}^{(t+1)}= \sum_{i=1}^{N}\gamma_{ik}^{(t+1)}x
    _{i}, \qquad \mu_{k}^{(t+1)}= \frac{r_{k}^{(t+1)}}{\|r_{k}^{(t+1)}\|_{2}+\varepsilon}
    ,
\end{equation}
where $\varepsilon>0$ is a small constant to avoid numerical issues (e.g., near-empty
clusters with $\|r_{k}\|\approx 0$). For concentration parameters, let
\begin{equation}
    \bar{R}_{k}^{(t+1)}\coloneqq \frac{\|r_{k}^{(t+1)}\|_{2}}{\sum_{i=1}^{N}\gamma_{ik}^{(t+1)}}
    \in [0,1),
\end{equation}
and estimate $\kappa_{k}$ using the standard high-dimensional approximation:
\begin{equation}
    \label{eq:kappa_update_new}\kappa_{k}^{(t+1)}\approx \frac{\bar{R}_{k}^{(t+1)}d
        - (\bar{R}_{k}^{(t+1)})^{3}}{1-(\bar{R}_{k}^{(t+1)})^{2}}.
\end{equation}

\noindent
\textbf{Summary.} GEM combines (i) a spherical vMF mixture with fixed uniform
generative prior $\boldsymbol{\alpha}$ and (ii) a mixing-balance regularizer
on the empirical mass $\boldsymbol{\pi}(\Gamma)$, optimized via an MM-based
E-step with guaranteed monotonic ascent (Eqs.~\eqref{eq:concave_smooth_minorizer}--\eqref{eq:mm_monotone})
and closed-form M-step updates (Eqs.~\eqref{eq:mu_update_new}--\eqref{eq:kappa_update_new}).
This yields semantically coherent yet diverse clusters suitable for robust data
mixing in LLM pre-training. The pseudocode of the overall process is
provided in Appendix~\ref{sec:pseudocode_gem}.

\begin{figure}[t]
    \centering
    \includegraphics[width=0.8\linewidth]{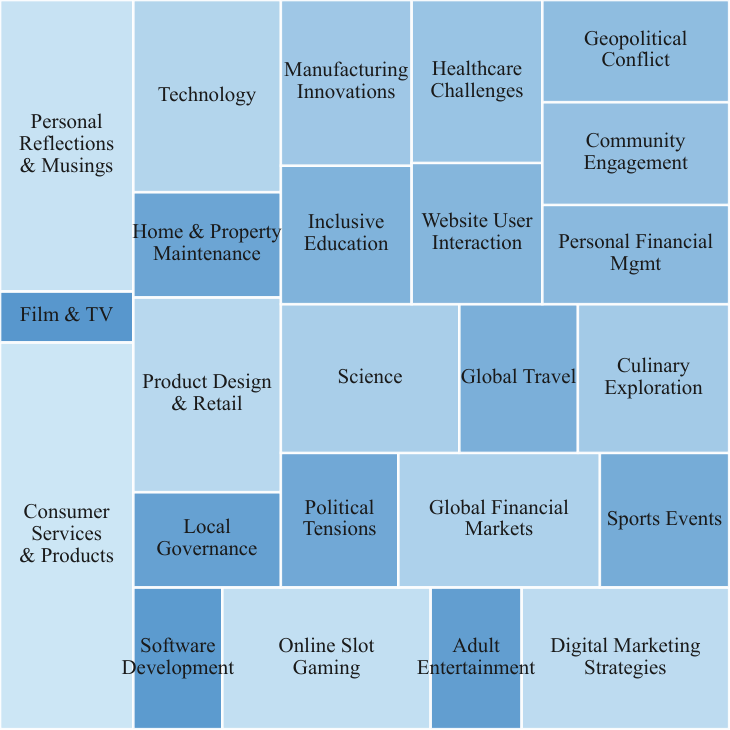}
    \caption{Visualization of the discovered latent semantic taxonomy. The area
        of each rectangle is proportional to the square root of the number of tokens
        in that domain within the pre-training corpus.}
    \label{fig:domain_treemap}
\end{figure}

\subsection{Interpretable Taxonomy Generation}
To transform the unsupervised partition into an interpretable taxonomy for human-understandable
data mixing, we employ a Geometric Influence Score (GIS) to select
representative samples from each cluster, which are then summarized by LLMs to
generate semantic labels. The generated taxonomy is presented in Figure~\ref{fig:domain_treemap}.
The detailed formulation of GIS and the taxonomy generation process are provided
in Appendix~\ref{appendix:gis_sampling}.

\subsection{Scalable Deployment of GEM for Data Mixing}

Direct application of GEM to trillion-token corpora is computationally
prohibitive due to the cost of iterative EM optimization. We bridge this gap
using a Teacher-Student distillation framework. In this framework, GEM serves
as the "Teacher," operating on a representative seed corpus to discover
latent semantic structures. We then utilize the proposed Geometric Influence
Score (GIS) to curate a high-confidence, balanced training set from these
clusters. This dataset supervises a ``Student'' model, a lightweight linear
classifier which approximates the GEM-induced partition function. This design
aligns with established industry practices for processing web-scale data,
where linear classifiers are preferred for their strict latency constraints~\cite{wenzek2020ccnet,soldaini2024dolma}.
By distilling the hyperspherical geometry into a fast inference model, we
enable semantic partitioning over the full pre-training corpus with
negligible computational overhead. Detailed procedures for the clustering,
pseudo-labeling, and distillation phases are provided in Appendix~\ref{app:scalable_deployment}.

\begin{table*}
    [t]
    \centering
    \caption{Main results on downstream tasks. We compare our \textbf{GEM}
        sampling strategy against baseline methods (K-Means, Spherical K-Means, TOS) and existing semantic
        organizers (WebOrganizer) across three different pre-training frameworks.
        The best results within each group are highlighted in \textbf{bold}, and
        the second best results are highlighted by \underline{underline}.}

    \label{tab:main_results}
    \setlength{\tabcolsep}{8pt} 
    \renewcommand{\arraystretch}{1.1} 
    \begin{tabularx}
        {\textwidth}{@{}X|c>{\centering\arraybackslash}X>{\centering\arraybackslash}Xc}
        \toprule \textbf{Model Variation} & \textbf{Science QA}           & \textbf{Commonsense
        Reasoning}                        & \textbf{Logic \& Linguistics} & \textbf{Average}                                            \\
        \midrule \multicolumn{5}{c}{Under DoReMi}                                                                                       \\
        \midrule K-Means                  & 32.18                         & 34.21               & 53.43             & 39.94             \\
        Spherical K-Means                 & 34.62                         & \underline{38.97}   & 54.72             & 42.77             \\
        TOS                               & 32.49                         & 34.29               & 53.98             & 40.25             \\
        WebOrganizer Topic                & \underline{34.68}             & 38.26               & \underline{55.35} &
        42.76                                                                                                                           \\
        WebOrganizer Format               & 34.44                         & 38.73               & 55.19             & \underline{42.79}
        \\
        \textbf{GEM (Ours)}               & \textbf{34.79}                & \textbf{39.96}      & \textbf{57.11}
                                          & \textbf{43.95}                                                                              \\

        \midrule \multicolumn{5}{c}{Under Perf}                                                                                         \\
        \midrule K-Means                  & 32.48                         & 35.11               & 55.52             & 41.04             \\
        Spherical K-Means                 & 34.56                         & \underline{39.95}   & 53.85             & 42.78             \\
        TOS                               & 32.54                         & 34.51               & 53.29             & 40.11             \\
        WebOrganizer Topic                & 35.05                         & 39.73               & 57.90             & 44.23             \\
        WebOrganizer Format               & \underline{35.06}             & 39.73               & \underline{57.97}
                                          & \underline{44.25}                                                                           \\
        \textbf{GEM (Ours)}               & \textbf{35.96}                & \textbf{40.43}      & \textbf{57.98}
                                          & \textbf{44.79}                                                                              \\

        \midrule \multicolumn{5}{c}{Under RegMix}                                                                                       \\
        \midrule K-Means                  & 31.63                         & \underline{34.86}   & \textbf{55.67}    & 40.72
        \\
        Spherical K-Means                 & 32.58                         & \underline{34.86}   & 54.47             & 40.63             \\
        TOS                               & 32.18                         & 34.35               & 52.40             & 39.64             \\
        WebOrganizer Topic                & 33.90                         & 33.83               & 52.50             & 40.08             \\
        WebOrganizer Format               & \textbf{34.12}                & 33.94               & 54.26             & \underline{40.77}
        \\
        \textbf{GEM (Ours)}               & \underline{34.07}             & \textbf{35.30}      & \underline{54.97}
                                          & \textbf{41.45}                                                                              \\
        \bottomrule
    \end{tabularx}%
\end{table*}

\section{Experiments}
\label{sec:experiments}

In this section, we evaluate the effectiveness of our proposed taxonomy in
the context of large-scale data mixing. We aim to demonstrate that our classification
scheme provides a more granular and semantically coherent partitioning of web
data, leading to superior downstream model performance when combined with
various data selection and reweighting algorithms.

\subsection{Experimental Settings}
\label{subsec:settings}

\noindent
\textbf{Datasets and Models.} To systematically investigate the
effectiveness of data classification for data mixing on unlabeled web-scale
corpora, we construct our pre-training dataset from raw CommonCrawl (CC)~\cite{raffel2020exploring}
data. We apply a rigorous cleaning and filtering pipeline that closely
follows the protocols established in RefinedWeb~\cite{refinedweb}, ensuring high-quality
and noise-reduced training data. We adopt a LLaMA-style Transformer
architecture~\cite{llama} with 1.1B parameters. The model configuration is
trained with a fixed compute budget of 25 billion tokens to ensure fair and
controlled comparisons across different data mixing strategies. Details of the
pre-training hyperparameters and experimental infrastructure are provided in
Appendix~\ref{pretraining_details}.

\noindent
\textbf{Categorization Baselines.} We compare our proposed taxonomy against
several established data organization paradigms: (1) \textbf{TOS}~\cite{tos},
which employs a human-curated hierarchical semantic structure for domain partitioning;
(2) \textbf{WebOrganizer (Topic)} and (3) \textbf{WebOrganizer (Format)}, which
leverage LLM-derived topic labels and structural layout features, respectively~\cite{weborganizer};
(4) \textbf{K-means}~\cite{kmeans},
an unsupervised clustering baseline that partitions data based on document-level
embeddings; and (5) \textbf{Spherical K-Means}, which replaces Euclidean distance
with cosine geometry while retaining hard assignments. Details for implementation
and training are provided in Appendix~\ref{appendix:data_classification}.

\noindent
\textbf{GEM Hyperparameters.} Unless otherwise noted, we set $K=24$ and
$\lambda=5000$ in the main experiments. The value of $\lambda$ is chosen by
aligning the balance regularizer with the assignment-logit scale induced by
the vMF likelihood, whose learned concentrations are typically around
$\kappa\approx900$. We provide sensitivity analyses for both $\lambda$ and
the seed-corpus size in Appendix~\ref{appendix:robustness_sensitivity}.

\noindent
\textbf{Data Mixing Strategies.} To evaluate the downstream utility of these
taxonomies, we integrate them with the following mixing algorithms:
(1) \textbf{Perf}, which utilizes a sensitivity-driven approach by upsampling
high-impact categories identified through preliminary performance gains~\cite{tos};
(2) \textbf{RegMix}, which formulates data mixing as a regression task to
predict model performance across different distribution vectors~\cite{regmix};
and (3) \textbf{DoReMi}, which exploits a distributionally robust optimization
objective to minimize the maximum excess loss via a proxy model~\cite{doremi}.
We provide the detailed search space and computational budgets for ratio
optimization in Appendix \ref{appendix:implementation}.

\noindent
\textbf{Downstream Task Evaluation.} We evaluate the zero-shot generalization
capabilities of our pre-trained models using the \textbf{OLMES} ~\cite{gu2025olmes}
framework across nine benchmark datasets. To provide a granular analysis of
model performance, we categorize these benchmarks into three core dimensions:(1)
\textbf{Science QA}: ARC-Challenge~\cite{arc}, ARC-Easy~\cite{arc}, SciQ~\cite{sciq},
and OpenBookQA~\cite{openbookqa}. (2) \textbf{Commonsense Reasoning}: HellaSwag~\cite{hellaswag},
PIQA~\cite{Piqa}, and CommonsenseQA~\cite{commonsenseqa}. (3) \textbf{Logic
    \& Linguistics}: WinoGrande~\cite{winogrande} and COPA~\cite{copa}. By adhering
to the standardized evaluation matrix in OLMES~\cite{gu2025olmes}, we ensure
consistency in prompt templates and scoring metrics.

\subsection{Main Results}

We evaluate the effectiveness of our proposed method by integrating the GEM-based
taxonomy into three distinct data mixing frameworks: DoReMi, Perf, and
RegMix. Table~\ref{tab:main_results} presents the comparative performance across
three aggregated capabilities and the overall average.

\noindent
\textbf{Downstream Performance Analysis.} The empirical results presented in
Table~\ref{tab:main_results} substantiate the effectiveness of our proposed
GEM strategy, which consistently achieves superior performance across diverse
data mixing frameworks. Specifically, under the \textbf{DoReMi} framework, GEM
exhibits the most pronounced improvements, surpassing the strongest baseline,
WebOrganizer, by substantial margins of 1.23\% and 1.76\% points on
Commonsense Reasoning and Logic \& Linguistics, respectively. This trend
persists within the \textbf{Perf} framework, where GEM establishes best
results in Science QA and Commonsense Reasoning, while maintaining a competitive
edge in Logic \& Linguistics by attaining a score of 57.98\% compared to the
57.97\% achieved by the runner-up. In the \textbf{RegMix} setting, although
the K-Means baseline holds a marginal advantage in Logic \& Linguistics with
a performance of 55.67\%, GEM remains dominant in Science QA and Commonsense
Reasoning, significantly outperforming other semantic-aware organizers.
Collectively, these results underscore the robustness of GEM in selecting high-value
training samples that enhance model reasoning capabilities independent of the
underlying mixing strategy.

\begin{figure}[t]
    \centering
    \includegraphics[width=\linewidth]{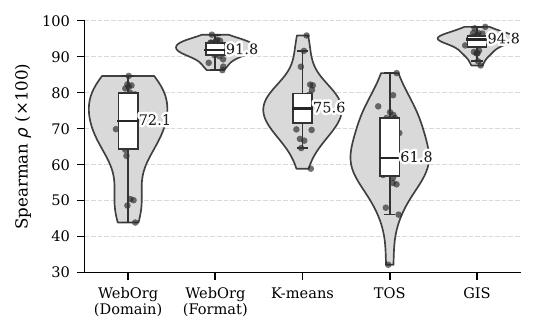}
    \caption{Downstream loss predictability across taxonomies. Violin plots show
        per-subcategory Spearman correlation ($\rho$) between RegMix and ground-truth
        validation loss, averaged over 10 splits. Dots: subcategories; Boxes:
        Median/IQR; Whiskers: $1.5\times$ IQR.}

    \label{fig:regmix_spearman_violin}
\end{figure}


\noindent
\textbf{Evaluating Taxonomy Quality via Data Mixing Predictability.}
\label{subsec:tax_predictability} A robust taxonomy for data mixing must
yield well-conditioned mixing coordinates, where small adjustments to mixture
weights induce consistent and predictable shifts in validation loss. We
quantify this property using \textbf{RegMix}~\cite{regmix} as a \emph{predictability
    probe}. Specifically, we measure the Spearman rank correlation ($\rho$)
between the ground-truth validation loss and the loss predicted by RegMix on
held-out mixture vectors (implementation details in Appendix~\ref{mixing_predic}).
To reduce variance due to a particular split of mixture vectors, we repeat
the procedure over $10$ independent train/test splits and report, for each sub-taxonomy
unit, the average $\rho$ across splits. As illustrated in Figure~\ref{fig:regmix_spearman_violin},
the choice of taxonomy significantly impacts predictability. Baselines such as
K-Means and WebOrganizer exhibit lower and more dispersed $\rho$ values,
indicating unstable mixing dynamics dominated by redundant or entangled
factors. In contrast, \textbf{GEM} demonstrates superior predictability,
achieving consistently higher $\rho$ with a notably tighter distribution. This
reduction in variance suggests that GEM induces a taxonomy with minimized factor
entanglement and a smoother optimization landscape, thereby enabling more
sample-efficient mixture search and reliable control over data composition
during pre-training.

\begin{figure}[t]
    \centering
    \includegraphics[width=0.95\columnwidth]{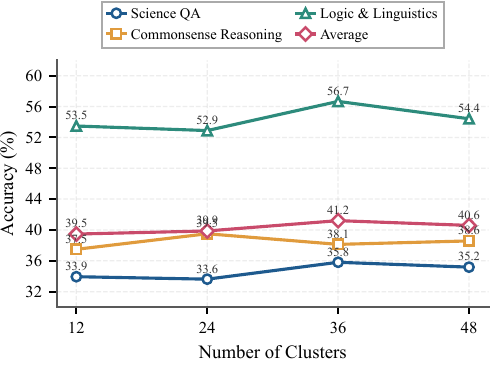}
    \caption{Sensitivity to the number of clusters $K$. Accuracy (\%) on Science
        QA, Commonsense Reasoning, Logic \& Linguistics, and Average as $K$
        varies from 12 to 48 (GEM + Perf.)}
    \label{fig:cluster_ablation}
\end{figure}

\noindent
\textbf{Impact of Cluster Granularity.} In our primary experiments, we set the
number of clusters to $K=24$ to ensure a controlled comparison across all baselines.
To further investigate the optimal granularity for the GIS module, we conduct
a sensitivity analysis by varying $K$ from 12 to 48. As illustrated in Figure~\ref{fig:cluster_ablation},
the model performance manifests a clear dependence on cluster density. Specifically,
we observe a consistent performance trajectory that peaks at $K=36$, achieving
an \textit{Average} score of 41.21\%. This trend suggests that increasing
cluster granularity facilitates the capture of more refined latent semantic
patterns. However, performance begins to plateau or slightly degrade as $K$
reaches 48. This marginal decline is likely attributable to the over-fragmentation
of the embedding space, where excessive partitioning may introduce stochastic
noise or impede the model’s ability to derive robust, generalized
representations.


\begin{figure}[t]
    \centering
    \includegraphics[width=0.95\linewidth]{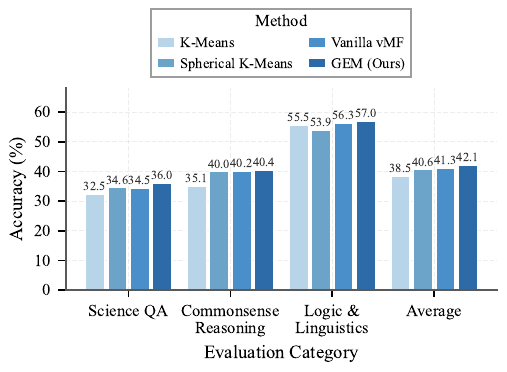}
    \caption{Ablation study on clustering mechanisms. The results highlight the
        improvements gained by shifting from Euclidean to Riemannian geometry and
        by incorporating regularizer.}
    \label{fig:ablation_study}
\end{figure}

\subsection{Ablation Study}
\label{subsec:ablation}

To rigorously assess the contribution of each component within our framework,
we perform an ablation study comparing GEM against three baseline variants: K-Means,
Spherical K-Means (Hyperspherical geometry with hard assignments), and Vanilla
vMF (Riemannian geometry without regularizer). As visualized in Figure~\ref{fig:ablation_study},
the results exhibit a monotonic performance improvement that aligns with the
theoretical fidelity of the clustering objective. The standard K-Means
baseline yields the lowest average accuracy of 38.5\%, corroborating the
limitations of Euclidean metrics in high-dimensional embedding spaces. Transitioning
to Spherical K-Means increases the average accuracy to 40.6\%. Vanilla vMF further
elevates performance to 41.3\% by leveraging probabilistic soft assignments
to capture semantic nuances. Crucially, the proposed GEM framework achieves
the highest average accuracy of 42.1\%, with notable gains in Science QA at
36.0\% and Logic \& Linguistics at 57.0\%. These findings confirm that the gradient-consistent
entropy constraint is essential for mitigating anisotropic cluster collapse,
thereby ensuring a semantically balanced taxonomy that facilitates robust
downstream generalization.

\begin{figure}[t]
    \centering
    \includegraphics[width=0.99\linewidth]{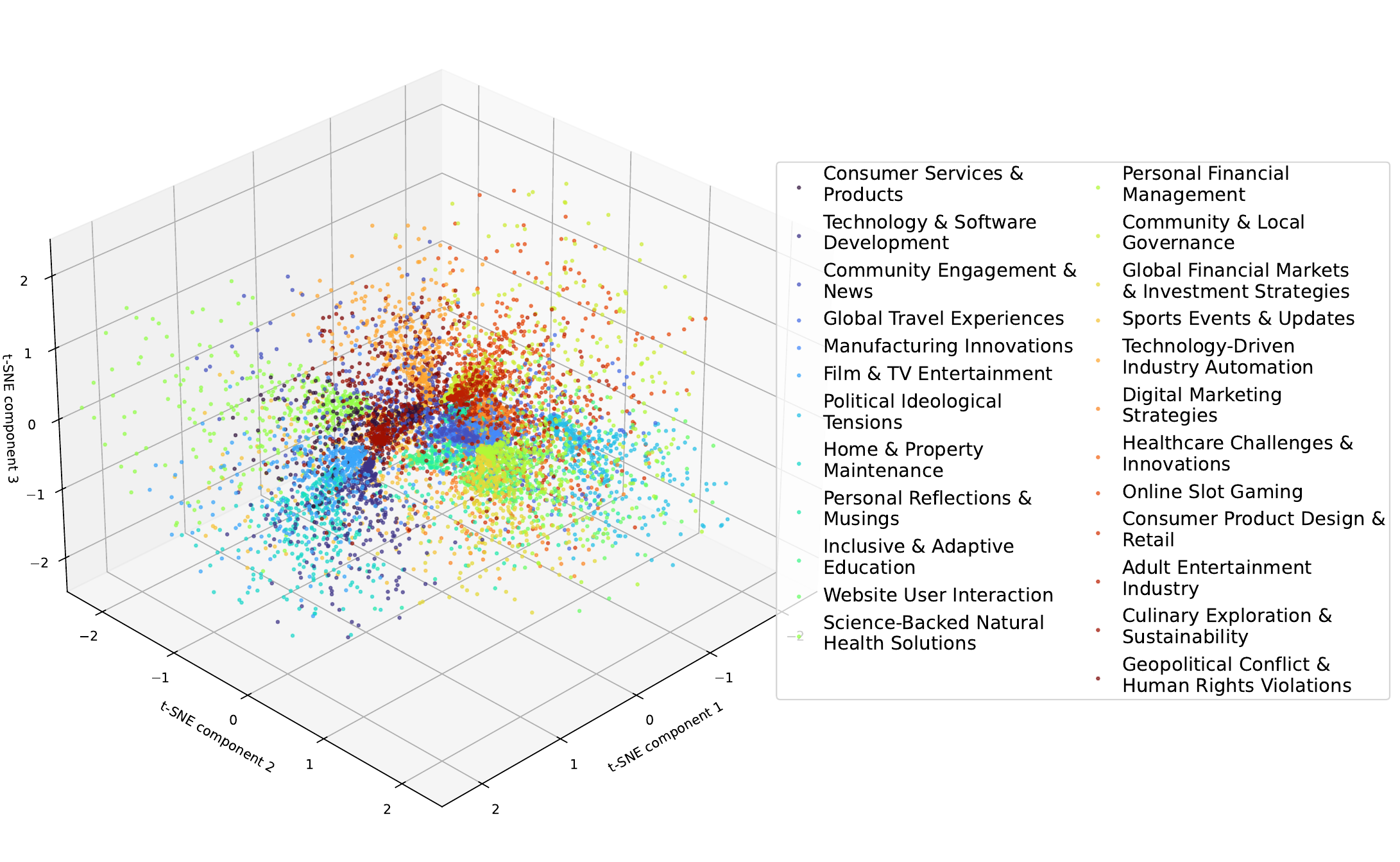}
    \caption{t-SNE visualization of GEM-induced clusters (3D). Colors denote
        the 24 taxonomy topics generated by GIS-based sampling and LLM summarization.}
    \label{fig:tsne_clusters}
\end{figure}

\section{Discussion}
\label{sec:discussion}

A key message of GEM is that ``better'' categorization for LLM data mixing
should be evaluated not only by human interpretability, but by whether the induced
coordinates make mixture search controllable and predictable. Our RegMix-based
predictability probe (Figure~\ref{fig:regmix_spearman_violin}) indicates
that GEM yields a better-conditioned mixing simplex—small weight
perturbations produce more consistent loss orderings—suggesting reduced
factor entanglement relative to Euclidean heuristics. Although GEM clusters are
unlabeled, GIS-guided representative selection makes interpretability a
practical byproduct rather than a fragile post-hoc step: the t-SNE visualization
(Figure~\ref{fig:tsne_clusters}) shows multiple discernible local neighborhoods
alongside overlap in dense regions, reflecting genuine semantic proximity
among discovered topics rather than mere labeling noise.


Several avenues remain for future exploration. First, the current 1.1B-parameter,
25B-token setting should be interpreted as an early-stage scaling indicator;
extending GEM to multi-trillion-token regimes and diverse model architectures
remains necessary to further validate its scaling properties. Second,
investigating GIS as a standalone data curation
primitive can clarify its efficacy in prioritizing high-signal samples beyond
mere partitioning. Third, deployment on continuously updated web corpora should
monitor distribution shifts such as AI-generated content contamination; we
provide a preliminary robustness analysis in Appendix~\ref{appendix:aigc_contamination}.
Finally, exploring the co-optimization of taxonomy discovery
and downstream objectives offers a compelling direction; such bi-level
optimization frameworks could enable a self-improving cycle where the
semantic basis and model performance mutually adapt to enhance task-specific
generalization.

\section{Conclusion}

We presented \textbf{GEM (Geometric Entropy Mixing)}, a geometry-aware framework
for unsupervised data categorization that aligns semantic partitioning with hyperspherical
embedding structures. By combining a vMF mixture model with an explicit
mixing-balance regularizer and an MM-based inference scheme with monotonic
ascent, GEM mitigates anisotropy-induced cluster collapse and yields semantically
coherent yet balanced partitions. To scale to web-scale corpora, we distill GEM’s
decisions into a lightweight classifier and employ GIS-guided sampling for interpretable
taxonomies. Extensive experiments with 1.1B-parameter models show that GEM consistently
improves downstream performance across multiple mixing frameworks. GEM
offers a principled foundation for future work on jointly optimizing
taxonomy discovery and data mixture learning.


\section*{Impact Statement}

This paper presents work whose goal is to advance the field of Machine
Learning. There are many potential societal consequences of our work, none
which we feel must be specifically highlighted here.



\bibliography{example_paper}
\bibliographystyle{icml2026}

\newpage
\appendix
\onecolumn

\section{Monotonic Convergence of GEM}
\label{app:monotone_gem}

\noindent
\textbf{Monotonic ascent guarantee.}
\begin{theorem}[Monotonic convergence of GEM]
    \label{thm:gem_monotone} Let $\mathcal{F}(\Theta,\Gamma)$ denote the GEM
    objective in Eq.~\eqref{eq:gem_elbo_objective}. Fix an iteration $t$ and
    construct the MM surrogate $\widetilde{\mathcal{F}}_{t}(\Gamma)$ as in
    Eq.~\eqref{eq:mm_surrogate} using
    $\boldsymbol{\pi}^{(t)}\coloneqq \boldsymbol{\pi}(\Gamma^{(t)})$. Assume
    the E-step returns $\Gamma^{(t+1)}$ such that
    \begin{equation}
        \label{eq:estep_surrogate_increase}\widetilde{\mathcal{F}}_{t}(\Gamma
        ^{(t+1)}) \;\ge\; \widetilde{\mathcal{F}}_{t}(\Gamma^{(t)}),
    \end{equation}
    and the M-step returns $\Theta^{(t+1)}$ such that
    \begin{equation}
        \label{eq:mstep_increase}\mathcal{F}(\Theta^{(t+1)},\Gamma^{(t+1)}) \;
        \ge\; \mathcal{F}(\Theta^{(t)},\Gamma^{(t+1)}).
    \end{equation}
    Then GEM produces a monotone non-decreasing sequence of objective values:
    \begin{equation}
        \label{eq:gem_monotone_chain}\mathcal{F}(\Theta^{(t+1)},\Gamma^{(t+1)}
        ) \;\ge\; \mathcal{F}(\Theta^{(t)},\Gamma^{(t)}), \qquad \forall\,t\ge
        0.
    \end{equation}
    Consequently, if $\mathcal{F}$ is upper bounded on the feasible set, the
    sequence $\{\mathcal{F}(\Theta^{(t)},\Gamma^{(t)})\}_{t\ge 0}$ converges
    to a finite limit.
\end{theorem}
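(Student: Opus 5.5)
The argument is a standard MM sandwich. We chain the two assumed inequalities through the minorization property of the surrogate, then invoke the monotone convergence theorem for bounded real sequences.

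\emph{Step 1: minorization.} We first establish that, for every $t$ and every feasible $\Gamma$,
\[
\widetilde{\mathcal{F}}_{t}(\Gamma)\le \mathcal{F}(\Theta^{(t)},\Gamma), \qquad \widetilde{\mathcal{F}}_{t}(\Gamma^{(t)})=\mathcal{F}(\Theta^{(t)},\Gamma^{(t)}).
\]
The ELBO part (the $\gamma_{ik}\log(\alpha_k f_{ik}(\Theta^{(t)}))$ terms and the entropies) is identical in $\widetilde{\mathcal{F}}_{t}$ and $\mathcal{F}(\Theta^{(t)},\cdot)$. So only the regularizer part needs comparison, and that is exactly Eq.~\eqref{eq:concave_smooth_minorizer} with $\boldsymbol{\pi}'=\boldsymbol{\pi}^{(t)}$ and $\boldsymbol{\pi}=\boldsymbol{\pi}(\Gamma)$.

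To justify Eq.~\eqref{eq:concave_smooth_minorizer} from Proposition~\ref{prop:regularizer_properties}, note that $R$ is an exact quadratic. Its second-order Taylor expansion is therefore exact:
\[
R(\boldsymbol{\pi})=R(\boldsymbol{\pi}')+\langle\nabla R(\boldsymbol{\pi}'),\boldsymbol{\pi}-\boldsymbol{\pi}'\rangle-\tfrac{\lambda}{2}\|\boldsymbol{\pi}-\boldsymbol{\pi}'\|_2^2 .
\]
Hence the minorizer holds with equality. The same conclusion also follows, more generally, from the $\lambda$-smoothness in part (iii) via the descent lemma. Setting $\Gamma=\Gamma^{(t)}$ makes the linear and quadratic correction terms vanish, which gives the tightness claim.

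\emph{Step 2: chaining.} The four relations combine as
\[
\mathcal{F}(\Theta^{(t+1)},\Gamma^{(t+1)})\ \ge\ \mathcal{F}(\Theta^{(t)},\Gamma^{(t+1)})\ \ge\ \widetilde{\mathcal{F}}_{t}(\Gamma^{(t+1)})\ \ge\ \widetilde{\mathcal{F}}_{t}(\Gamma^{(t)})\ =\ \mathcal{F}(\Theta^{(t)},\Gamma^{(t)}).
\]
The inequalities use, in order, Eq.~\eqref{eq:mstep_increase}, minorization, and Eq.~\eqref{eq:estep_surrogate_increase}; the final equality is tightness. This yields Eq.~\eqref{eq:gem_monotone_chain} for every $t\ge 0$.

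\emph{Step 3: convergence.} The sequence $\{\mathcal{F}(\Theta^{(t)},\Gamma^{(t)})\}$ is non-decreasing. Under the upper-boundedness hypothesis it converges to its supremum, which is finite. The starting value $\mathcal{F}(\Theta^{(0)},\Gamma^{(0)})$ must also be finite, so we implicitly need $\kappa_k<\infty$ and finite log-densities at initialization.

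The only step needing care is Step~1, specifically confirming that the surrogate's non-regularizer terms match $\mathcal{F}$ exactly at $\Theta^{(t)}$. Since $\Theta$ is held fixed in the E-step, this is immediate. I would also remark on the M-step. The closed-form updates Eqs.~\eqref{eq:mu_update_new}--\eqref{eq:kappa_update_new} are only approximate (because of the $\varepsilon$ and the $\kappa$ approximation), which is why the theorem assumes Eq.~\eqref{eq:mstep_increase} instead of deriving it. Note that $\Theta$ appears only in the ELBO term, so the regularizer is unaffected by the M-step.
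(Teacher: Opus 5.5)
Your proof is correct and follows the same route as the paper: you get minorization and tightness of $\widetilde{\mathcal{F}}_t$ from Eq.~\eqref{eq:concave_smooth_minorizer}, chain this with the E-step and M-step assumptions, and finish with monotone convergence of a sequence bounded above. Your remark that the second-order expansion of the quadratic $R$ is exact is also correct: it gives $\widetilde{\mathcal{F}}_t(\Gamma)=\mathcal{F}(\Theta^{(t)},\Gamma)$ for every feasible $\Gamma$, so the minorization step holds trivially, and the descent lemma needs only $\lambda$-smoothness, not concavity.
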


\begin{proof}
    We first show that $\widetilde{\mathcal{F}}_{t}$ is a valid minorizer of
    $\Gamma\mapsto \mathcal{F}(\Theta^{(t)},\Gamma)$. Let
    $R(\boldsymbol{\pi})\coloneqq -\tfrac{\lambda}{2}\|\boldsymbol{\pi}-\mathbf{u}
        \|_{2}^{2}$. By Proposition~\ref{prop:regularizer_properties}, $R$ is
    concave and $\lambda$-smooth, hence it satisfies the global quadratic minorization
    inequality in Eq.~\eqref{eq:concave_smooth_minorizer}, i.e., for all
    $\boldsymbol{\pi}\in\Delta^{K-1}$,
    \[
        R(\boldsymbol{\pi}) \;\ge\; R(\boldsymbol{\pi}^{(t)}) +\big\langle \nabla
        R(\boldsymbol{\pi}^{(t)}),\, \boldsymbol{\pi}-\boldsymbol{\pi}^{(t)}\big
        \rangle -\frac{\lambda}{2}\big\|\boldsymbol{\pi}-\boldsymbol{\pi}^{(t)}
        \big\|_{2}^{2}.
    \]
    Substituting $\boldsymbol{\pi}=\boldsymbol{\pi}(\Gamma)$ yields, for all
    feasible $\Gamma$,
    \begin{equation}
        \label{eq:minorizer_relation}\widetilde{\mathcal{F}}_{t}(\Gamma) \;\le
        \; \mathcal{F}(\Theta^{(t)},\Gamma), \qquad \widetilde{\mathcal{F}}_{t}
        (\Gamma^{(t)}) \;=\; \mathcal{F}(\Theta^{(t)},\Gamma^{(t)}),
    \end{equation}
    where the equality follows because $\boldsymbol{\pi}(\Gamma^{(t)})=\boldsymbol
        {\pi}^{(t)}$ makes Eq.~\eqref{eq:concave_smooth_minorizer} tight.

    Using Eq.~\eqref{eq:minorizer_relation} and the E-step condition \eqref{eq:estep_surrogate_increase},
    we obtain
    \[
        \mathcal{F}(\Theta^{(t)},\Gamma^{(t+1)}) \;\ge\; \widetilde{\mathcal{F}}
        _{t}(\Gamma^{(t+1)}) \;\ge\; \widetilde{\mathcal{F}}_{t}(\Gamma^{(t)}
        ) \;=\; \mathcal{F}(\Theta^{(t)},\Gamma^{(t)}).
    \]
    Finally, applying the M-step condition \eqref{eq:mstep_increase} gives
    \[
        \mathcal{F}(\Theta^{(t+1)},\Gamma^{(t+1)}) \;\ge\; \mathcal{F}(\Theta
        ^{(t)},\Gamma^{(t+1)}) \;\ge\; \mathcal{F}(\Theta^{(t)},\Gamma^{(t)})
        ,
    \]
    which proves Eq.~\eqref{eq:gem_monotone_chain}. If $\mathcal{F}$ is
    upper bounded on the feasible set, any monotone non-decreasing sequence $\{
        \mathcal{F}(\Theta^{(t)},\Gamma^{(t)})\}_{t\ge 0}$ converges to a finite
    limit.
\end{proof}

\section{Interpretability Analysis: GIS and Taxonomy Details}
\label{appendix:gis_sampling}

\subsection{Interpretability Sampling via GIS}
\label{appendix:gis_sampling_details}

Unsupervised clustering yields a partition function
$x_{i}\mapsto \arg\max_{k}\gamma_{ik}$, but the resulting cluster indices are
not directly human-interpretable. To enable domain-level control in
downstream data mixing (e.g., ``increase Math by 10\%''), we convert each
cluster into a semantic label by prompting an LLM with a small set of \emph{representative}
samples from that cluster. The key challenge is that naive random sampling
often selects either boundary points (ambiguous assignments) or
geometrically central but semantically isolated points under anisotropic
embedding distributions. We therefore propose a principled selection criterion,
the \textbf{Geometric Influence Score (GIS)}, that is consistent with the
GEM objective in Sec.~\ref{sec:method}.

\paragraph{Definition.}
Let $\Gamma$ and $\Theta=\{(\mu_{k},\kappa_{k})\}_{k=1}^{K}$ be the learned
GEM assignments and vMF parameters (Sec.~\ref{subsec:mixture}--\ref{subsec:mm_em}).
For a sample $x_{i}$ and cluster $k$, define the \emph{cluster-conditional} GIS
as
\begin{equation}
    \label{eq:gis_score}\mathrm{GIS}_{k}(x_{i}) ~\coloneqq~ \underbrace{\log\!\big(\gamma_{ik}+\varepsilon\big)}
    _{\text{certainty}}~+~ \underbrace{\log f_{\mathrm{vMF}}(x_i \mid \mu_k,\kappa_k)}
    _{\text{directional coherence}}~+~ \underbrace{\beta \log\!\big(\rho_k(x_i)+\varepsilon\big)}
    _{\text{local support}},
\end{equation}
where $\varepsilon>0$ is a small constant for numerical stability and $\beta\ge
    0$ controls the strength of the density term.

Using the vMF likelihood defined in Sec.~\ref{subsec:mixture},
\begin{equation}
    \log f_{\mathrm{vMF}}(x_{i}\mid \mu_{k},\kappa_{k}) ~=~ \log C_{d}(\kappa
    _{k}) + \kappa_{k}\mu_{k}^{\top}x_{i}.
\end{equation}
Since $\log C_{d}(\kappa_{k})$ is independent of $i$ for fixed $k$, ranking
samples \emph{within the same cluster} is equivalent to using $\kappa_{k}\mu_{k}
    ^{\top}x_{i}$; we keep the full $\log f_{\mathrm{vMF}}$ in Eq.~\eqref{eq:gis_score}
to make the probabilistic meaning explicit and to remain fully consistent
with the generative model.

To penalize semantically isolated points, we estimate a cluster-aware local density
by restricting neighbors to the same cluster:
\begin{equation}
    \label{eq:local_density_revised}\rho_{k}(x_{i}) ~\coloneqq~ \frac{1}{M}\sum
    _{x_j \in \mathcal{N}^{(k)}_M(x_i)}x_{i}^{\top}x_{j}, \qquad \mathcal{N}^{(k)}
    _{M}(x_{i})\subseteq \{x_{j}:\arg\max_{\ell}\gamma_{j\ell}=k\},
\end{equation}
where $\mathcal{N}^{(k)}_{M}(x_{i})$ denotes the $M$ nearest neighbors of
$x_{i}$ \emph{within cluster $k$} under cosine similarity (equivalently
Euclidean distance on $\mathcal{S}^{d-1}$). This cluster-restricted density
avoids cross-cluster contamination and directly measures whether $x_{i}$
lies in a densely populated region of the learned semantic direction.

For each cluster $k$, we select representatives by ranking points assigned to
$k$ using $\mathrm{GIS}_{k}(x_{i})$ and taking the top-$S$ samples:
\begin{equation}
    \mathcal{R}_{k}~\coloneqq~ \operatorname{TopS}\Big(\{\mathrm{GIS}_{k}(x_{i}
    )\}_{i:\arg\max_{\ell}\gamma_{i\ell}=k}\Big).
\end{equation}
The set $\mathcal{R}_{k}$ is then used as few-shot context to prompt an LLM
for a concise semantic label, yielding an interpretable taxonomy consistent with
the GEM-induced partition.


\subsection{GIS Taxonomy Details}
\label{gis_taxonomy}
\subsubsection{Taxonomy Description}

Table~\ref{tab:topic_definitions} provides a comprehensive overview of the semantic
taxonomies identified by our GEM framework. Each taxonomy represents a
distinct semantic cluster discovered through our entropy-regularized vMF
mixture modeling approach, enabling fine-grained categorization of the pre-training
corpus for optimal data mixing strategies.

    {\small
        \begin{longtable}{>{\raggedright\arraybackslash}p{0.22\textwidth} >{\raggedright\arraybackslash}p{0.73\textwidth}}
            \caption{Detailed overview of our topic definitions. This table lists the specific topics and their corresponding descriptions to decrease uncertainty and define domain boundaries.}
            \label{tab:topic_definitions}                                                                                                                                                                                                                                                     \\
            \toprule
            \textbf{Taxonomy}                                  & \textbf{Description}                                                                                                                                                                                                         \\
            \midrule
            \endfirsthead
            \multicolumn{2}{c}%
            {{\bfseries \tablename\ \thetable{} -- continued from previous page}}                                                                                                                                                                                                             \\
            \toprule
            \textbf{Taxonomy}                                  & \textbf{Description}                                                                                                                                                                                                         \\
            \midrule
            \endhead
            \midrule
            \multicolumn{2}{r}{{Continued on next page}}                                                                                                                                                                                                                                      \\
            \endfoot
            \bottomrule
            \endlastfoot
            Consumer Services and Products                     & This topic encompasses a range of services and products aimed at improving consumer experiences, emphasizing sustainability, marketing, and quality in various industries.                                                   \\ \addlinespace
            Technology and Software Development                & This topic encompasses the development, functionality, and improvement of software and technology solutions, focusing on web development, software tools, and user experience enhancements.                                  \\ \addlinespace
            Community Engagement and News                      & The topic focuses on the dissemination of local news, events, and initiatives that encourage community participation and awareness.                                                                                          \\ \addlinespace
            Global Travel Experiences                          & This topic encompasses the exploration and enjoyment of diverse travel destinations and cultural experiences worldwide, facilitated by convenient booking and travel arrangements.                                           \\ \addlinespace
            Manufacturing Innovations                          & This topic encompasses the latest advancements and innovations in manufacturing processes and product design across diverse industries, emphasizing quality, efficiency, and technological improvements.                     \\ \addlinespace
            Film and TV Entertainment                          & This topic encompasses the latest developments, releases, and events in the film and television industry, particularly focusing on superhero and action genres, film festivals, and popular franchises.                      \\ \addlinespace
            Political Ideological Tensions                     & The topic encompasses the ongoing political and ideological conflicts in the United States, emphasizing leadership, civil unrest, religious freedom, and conservative principles.                                            \\ \addlinespace
            Home and Property Maintenance                      & This topic encompasses a range of services aimed at maintaining and improving residential and commercial properties, ensuring they are clean, functional, and aesthetically pleasing.                                        \\ \addlinespace
            Personal Reflections and Musings                   & This topic involves sharing personal anecdotes, contemplations, and humorous insights on everyday life and broader philosophical questions, often in a casual and engaging narrative style.                                  \\ \addlinespace
            Inclusive and Adaptive Education                   & This topic focuses on educational strategies and programs designed to promote inclusivity, accessibility, and adaptability to meet the diverse needs of students and educators in evolving academic and career landscapes.   \\ \addlinespace
            Website User Interaction                           & This topic covers the various aspects of user engagement with websites, including privacy settings, account management, and customer support services.                                                                       \\ \addlinespace
            Science-Backed Natural Health Solutions            & This topic focuses on the integration of scientific research and natural ingredients to promote health and wellness, offering solutions for skincare, energy enhancement, and weight management.                             \\ \addlinespace
            Personal Financial Management                      & This topic encompasses strategies and resources for effectively managing personal finances in diverse situations, including moving, property division, home improvement, and budgeting.                                      \\ \addlinespace
            Community and Local Governance                     & The topic encompasses community resilience, local government initiatives, and administrative challenges within county-level governance.                                                                                      \\ \addlinespace
            Global Financial Markets and Investment Strategies & This topic encompasses the analysis of economic trends, investment strategies, and the impact of government policies on global financial markets.                                                                            \\ \addlinespace
            Sports Events and Updates                          & This topic encompasses the latest developments, competitions, and strategic changes in various sports, providing insights into the dynamic world of sports management and events.                                            \\ \addlinespace
            Technology-Driven Industry Automation              & The cluster explores how cutting-edge technologies and automation are transforming industry practices, enhancing operational efficiency, and driving innovation across sectors.                                              \\ \addlinespace
            Digital Marketing Strategies                       & This topic covers methods and tools for optimizing digital marketing to improve business visibility, audience engagement, and overall success.                                                                               \\ \addlinespace
            Healthcare Challenges and Innovations              & This topic encompasses the exploration of healthcare challenges, patient experiences, and innovative approaches to treatment and care delivery.                                                                              \\ \addlinespace
            Online Slot Gaming                                 & Online Slot Gaming refers to the digital version of traditional slot machines, offering players an engaging and convenient way to experience casino excitement with diverse themes and features.                             \\ \addlinespace
            Consumer Product Design and Retail                 & This topic encompasses the design, quality, and retail aspects of various consumer products, focusing on unique craftsmanship and customer shopping experiences.                                                             \\ \addlinespace
            Adult Entertainment Industry                       & This topic encompasses the production, distribution, and consumption of adult content, including live performances, online platforms, and the economic aspects of the adult entertainment sector.                            \\ \addlinespace
            Culinary Exploration and Sustainability            & This topic encompasses the art of cooking, the enjoyment of diverse flavors, and the significance of sustainable food practices.                                                                                             \\ \addlinespace
            Geopolitical Conflict and Human Rights Violations  & This topic encompasses the examination of violence, military actions, and human rights abuses in conflict zones, highlighting the systematic nature of these issues and their impact on civilians and freedom of expression. \\
            \bottomrule
        \end{longtable}
    }

\subsubsection{Prompt for Taxonomy Generation}

\begin{tcolorbox}
    [ enhanced, colback=gray!5, 
        colframe=blue!50!black, 
        title=Prompt for Taxonomy Generation, 
        fonttitle=\bfseries, arc=2mm, 
        drop shadow, 
        left=3mm, right=3mm, top=2mm, bottom=2mm 
    ] \itshape 
    You are an expert data taxonomist. \\
    I will provide you with \{len(indices)\} documents that belong to the
    same semantic cluster. \\
    Your task is to:
    \begin{enumerate}
        \item Summarize the common theme and content of these documents in 2-3
              sentences.

        \item Based on the summary, assign a single, concise, and high-quality
              'Topic Label' (2-5 words) that best describes this cluster.

        \item Describe the topic in a sentence.
    \end{enumerate}

    \textbf{Documents:} \\
    \{docs\_content\} \\

    Please strictly follow the output format: \\
    \textbf{Summary:} {summary content} \\
    \textbf{Topic:} {topic label} \\
    \textbf{Description:} {topic description}
\end{tcolorbox}

\subsection{Quantitative Coherence and GIS Ablations}
\label{appendix:taxonomy_coherence_gis_ablation}

To complement the qualitative taxonomy visualization in Figure~\ref{fig:domain_treemap},
we evaluate semantic consistency using automated topic-coherence metrics. For
each categorization strategy, we extract the top-15 TF-IDF words per cluster
and compute Normalized Pointwise Mutual Information (NPMI) and the $C_{V}$
score over a large sampled corpus. As shown in Table~\ref{tab:topic_coherence},
GEM achieves topic coherence on par with the LLM-annotated WebOrganizer Topic
taxonomy while retaining a fully unsupervised, model-native partitioning process.

\begin{table}[h]
    \centering
    \small
    \caption{Automated topic-coherence evaluation across taxonomies. Higher is better.}
    \label{tab:topic_coherence}
    \setlength{\tabcolsep}{12pt}
    \renewcommand{\arraystretch}{1.1}
    \begin{tabular}{lcc}
        \toprule
        \textbf{Taxonomy Strategy} & \textbf{NPMI} $\uparrow$ & \textbf{$C_{V}$ Score} $\uparrow$ \\
        \midrule
        WebOrganizer Format        & -0.1219                  & 0.4341                            \\
        WebOrganizer Topic         & \textbf{-0.0700}         & \textbf{0.4943}                   \\
        GEM (Ours)                 & \underline{-0.0728}      & \underline{0.4928}                \\
        \bottomrule
    \end{tabular}
\end{table}

We further ablate the representative-sample selection rule used for taxonomy
generation. For each cluster, we select five samples under each heuristic,
prompt an LLM to generate the topic description, and then use the description
as a prompt for labeling a held-out set of 720 samples. Table~\ref{tab:gis_ablation}
shows that GIS yields the highest labeling accuracy, indicating that combining
assignment certainty, directional coherence, and local support produces more
semantically robust prototypes than simpler alternatives.

\begin{table}[h]
    \centering
    \small
    \caption{Ablation study on prototype selection heuristics for taxonomy generation.}
    \label{tab:gis_ablation}
    \setlength{\tabcolsep}{12pt}
    \renewcommand{\arraystretch}{1.1}
    \begin{tabular}{llc}
        \toprule
        \textbf{Selection Method} & \textbf{Selection Criteria}              & \textbf{Accuracy (\%)} \\
        \midrule
        Random Sampling           & Uniform random                           & 80.41                  \\
        Confidence-only           & Maximize $\log \gamma_{ik}$              & 80.83                  \\
        Center-only               & Maximize $\kappa_{k}\mu_{k}^{\top}x_{i}$ & 81.67                  \\
        GIS (Ours)                & Eq.~\eqref{eq:gis_score}                 & \textbf{84.17}         \\
        \bottomrule
    \end{tabular}
\end{table}

\subsection{Qualitative Comparison with K-Means}
\label{appendix:kmeans_case_analysis}

We also inspect representative cases where GEM and K-Means produce different
cluster assignments. In one case, documents about configuring a vanity URL
shortener and open-source conference sprints were separated by K-Means into
surface-level website-interaction and politics-related clusters, whereas GEM
placed both into a coherent technology and software-development cluster. In
another case, a technical endoscopic-surgery abstract and a sunscreen news
article shared generic medical vocabulary and were grouped together by K-Means;
GEM separated them into healthcare-innovation and natural-health clusters,
respectively. These cases illustrate that GEM's directional probabilistic
modeling can reduce both unnatural fragmentation and inappropriate grouping
caused by superficial lexical overlap.

\section{Details of Scalable Deployment}
\label{app:scalable_deployment}

To scale GEM to trillion-token datasets, we employ a two-phase Teacher-Student
approach. This ensures that the high-quality geometric partitions discovered
by GEM can be applied to web-scale data without incurring the computational cost
of the EM algorithm on the entire corpus.

\noindent
\textbf{Phase 1: Clustering and GIS-Guided Pseudo-Labeling.} We first apply
GEM to a randomly sampled subset
$\mathcal{X}_{\mathrm{seed}}\subset \mathcal{X}$ to obtain the converged
directional parameters $\Theta^{*}= \{(\mu_{k}, \kappa_{k})\}_{k=1}^{K}$. Using
these learned directions as anchors, we retrieve candidate samples from the
larger unlabeled pool. To mitigate topic-frequency skew and ensure the student
model learns from semantically representative prototypes, we enforce cluster-wise
balanced sampling governed by the GIS score (defined in Eq.~\ref{eq:gis_score}).
Specifically, for each cluster $C_{k}$, we select the top-$M$ samples ranked
by their GIS values, yielding a balanced pseudo-labeled dataset:
\begin{equation}
    \mathcal{D}_{\mathrm{train}}= \bigcup_{k=1}^{K}\left\{ (w_{i}, y_{i}= k )
    \;\middle|\; x_{i}\in \mathrm{Top\text{-}}M \bigl( \text{GIS}(x \mid \Theta
        ^{*}) \bigr) \right\},
\end{equation}
where $w_{i}$ denotes the raw text corresponding to embedding $x_{i}$. By prioritizing
samples with high GIS, we ensure that the distilled dataset consists of
candidates with high model certainty, geometric centrality, and dense local
manifolds, effectively filtering out noise and ambiguous samples near decision
boundaries.

\noindent
\textbf{Phase 2: Lightweight Student Classifier Distillation.} We train a
lightweight text-space classifier $f_{\phi}: w \mapsto \Delta^{K-1}$ to
approximate the GEM-induced partition function. In our implementation, we
adopt \textbf{FastText}~\cite{joulin2017bag} due to its constant-time inference
complexity with respect to corpus size and its strong inductive bias toward linear,
directionally separable decision boundaries, which aligns well with the directional
nature of vMF distributions.

The student model is optimized via cross-entropy loss on the curated dataset
$\mathcal{D}_{\mathrm{train}}$:
\begin{equation}
    \mathcal{L}_{\mathrm{student}}(\phi) = - \sum_{(w, y) \in \mathcal{D}_{\mathrm{train}}}
    \log P_{\mathrm{FastText}}(y \mid w; \phi).
\end{equation}
Once trained, this classifier is deployed to process the full-scale pre-training
corpus, assigning a probability distribution over the $K$ latent topics to
every document.

\section{Pre-training Details}
\label{pretraining_details} We conduct pre-training experiments using the
Llama Factory framework on a high-performance compute node equipped with 8$\times$
NVIDIA GeForce RTX 5090 GPUs. The models are optimized using the AdamW
optimizer with a peak learning rate of $4 \times 10^{-4}$, supplemented by a
linear warmup phase of 2,000 steps and a subsequent cosine annealing schedule.
To ensure training stability and throughput, we employ a global batch size of
256, achieved through a configuration of 8 GPUs, a micro-batch size of 16,
and 2 gradient accumulation steps. Regularization is applied via a weight decay
of 0.1, and the maximum sequence length is constrained to a 2,048-token
cutoff. All comparative experiments maintain these standardized hyperparameters
to isolate the empirical impact of the GEM-induced taxonomy on downstream performance.

\section{Details of Implementation for Data Classification Strategies}
\label{appendix:data_classification} To operationalize the semantic partitions
at scale, we train a lightweight fastText~\cite{joulin2017bag} classifier to
approximate the clustering assignment. For a rigorous comparison, we curated
balanced datasets for both the $K$-Means baseline and the proposed GEM
framework by randomly sampling 5,000 documents per cluster, which were subsequently
partitioned into training, validation, and test sets following an 8:1:1 ratio.
For Spherical K-Means, all embeddings are $\ell_2$-normalized and centroids
are renormalized after each update, yielding hard assignments under cosine
similarity with the same $K$ as the corresponding GEM run.
While the classifier trained on $K$-Means annotations achieved a test
accuracy of 72.92\%, the model distilled from GEM partitions attained a
superior accuracy of 75.13\%. This performance advantage indicates that the
partitions generated by GEM on the Riemannian manifold possess higher
intrinsic semantic consistency and clearer separation boundaries compared to
Euclidean clusters, thereby reducing label noise and facilitating more robust
generalization in the student classifier.

\section{Details of Implementation for Data Mixing Strategies}
\label{appendix:implementation}

In this section, we provide the specific implementation details for the data
mixing strategies compared in our main experiments.

\noindent
\textbf{Perf.} Perf utilizes a sensitivity-driven heuristic inspired by task-oriented
sampling to identify and amplify high-value domains. This strategy is similar
to PerfRe introduced by~\cite{tos}. The process begins with a sensitivity
profiling phase, where we measure the contribution of each domain by individually
upsampling it by 30\% in temporary experimental mixtures and evaluating the
resulting proxy model on downstream tasks. Guided by these performance rankings,
we construct the final training distribution through a stratified upsampling
scheme: the top two domains are upsampled by 40\%, the third and fourth ranked
domains are upsampled by 20\%, and the remaining data budget is distributed uniformly
across all other categories.

\noindent
\textbf{REGMIX.} REGMIX formulates data-mixture selection as a regression
problem. It first samples a diverse set of candidate mixtures (e.g., from a
Dirichlet distribution anchored to the base token distribution) and trains proxy
models under these mixtures to obtain a scalar target signal for each candidate
(typically a validation-domain loss). REGMIX then fits a regressor from
mixture weights to the target signal, and uses large-scale mixture simulation
with fast regression inference to rank candidates and derive a robust final training
distribution (e.g., by selecting or averaging the top-ranked predicted
mixtures). In our instantiation, we train \textbf{256} proxy models with
\textbf{1M} parameters, each for \textbf{1B} tokens.

\noindent
\textbf{DoReMi.} DoReMi formulates data-mixture learning as a
distributionally robust (minimax) reweighting problem. It first trains a \emph{reference}
model, and then trains a \emph{proxy} model while \emph{online} updating domain
weights based on per-domain \emph{excess loss} relative to the reference
model, typically using exponentiated-gradient-style updates with smoothing.
The resulting (time-averaged) weights are then used as a fixed data mixture
for training the final model. In our setting, both the reference and proxy
models have \textbf{120M} parameters and are each trained for \textbf{10B}
tokens.

\begin{table}[h]
    \begin{minipage}[t]{0.48\textwidth}
        \centering
        \small
        \caption{Results on Science QA benchmarks.}
        \label{tab:science_qa_details}
        \setlength{\tabcolsep}{5pt}
        \renewcommand{\arraystretch}{1.1}
        \begin{tabular}{l|ccccc}
            \toprule \textbf{Model Variation} & \textbf{ARC-C} & \textbf{ARC-E} & \textbf{SciQ} & \textbf{OBQA} & \textbf{Avg} \\
            \midrule \multicolumn{6}{c}{Under DoReMi}                                                                          \\
            \midrule K-Means                  & 26.62          & 49.30          & 25.60         & 27.20         & 32.18        \\
            TOS                               & 25.34          & 50.00          & 27.00         & 27.60         & 32.49        \\
            WebOrganizer Topic                & 25.43          & 50.20          & 29.10         & 34.00         & 34.68        \\
            WebOrganizer Format               & 25.94          & 50.10          & 28.50         & 33.20         & 34.44        \\
            GEM (Ours)                        & 26.37          & 50.60          & 28.80         & 33.40         & 34.79        \\
            \midrule \multicolumn{6}{c}{Under Perf}                                                                            \\
            \midrule K-Means                  & 24.91          & 48.70          & 25.10         & 31.20         & 32.48        \\
            TOS                               & 25.94          & 48.80          & 28.00         & 27.40         & 32.54        \\
            WebOrganizer Topic                & 25.68          & 50.10          & 29.80         & 34.60         & 35.05        \\
            WebOrganizer Format               & 25.43          & 50.80          & 29.60         & 34.40         & 35.06        \\
            GEM (Ours)                        & 25.94          & 51.90          & 30.00         & 36.00         & 35.96        \\
            \midrule \multicolumn{6}{c}{Under RegMix}                                                                          \\
            \midrule K-Means                  & 26.62          & 47.90          & 25.00         & 27.00         & 31.63        \\
            TOS                               & 26.02          & 48.30          & 26.00         & 28.40         & 32.18        \\
            WebOrganizer Topic                & 26.79          & 49.60          & 26.20         & 33.00         & 33.90        \\
            WebOrganizer Format               & 25.77          & 49.80          & 26.90         & 34.00         & 34.12        \\
            GEM (Ours)                        & 26.28          & 50.20          & 26.40         & 33.40         & 34.07        \\
            \bottomrule
        \end{tabular}
    \end{minipage}
    \hfill
    \begin{minipage}[t]{0.45\textwidth}
        \centering
        \small
        \caption{Results on Commonsense Reasoning benchmarks.}
        \label{tab:commonsense_details}
        \setlength{\tabcolsep}{5pt}
        \renewcommand{\arraystretch}{1.1}
        \begin{tabular}{l|cccc}
            \toprule \textbf{Model Variation} & \textbf{CSQA} & \textbf{Hella} & \textbf{PIQA} & \textbf{Avg} \\
            \midrule \multicolumn{5}{c}{Under DoReMi}                                                         \\
            \midrule K-Means                  & 19.74         & 26.20          & 56.70         & 34.21        \\
            TOS                               & 21.38         & 24.80          & 56.70         & 34.29        \\
            WebOrganizer Topic                & 19.98         & 34.40          & 60.40         & 38.26        \\
            WebOrganizer Format               & 20.80         & 34.80          & 60.60         & 38.73        \\
            GEM (Ours)                        & 20.97         & 37.40          & 61.50         & 39.96        \\
            \midrule \multicolumn{5}{c}{Under Perf}                                                           \\
            \midrule K-Means                  & 22.03         & 29.10          & 54.20         & 35.11        \\
            TOS                               & 21.62         & 28.80          & 53.10         & 34.51        \\
            WebOrganizer Topic                & 20.80         & 35.20          & 63.20         & 39.73        \\
            WebOrganizer Format               & 20.88         & 35.50          & 62.80         & 39.73        \\
            GEM (Ours)                        & 20.88         & 37.60          & 62.80         & 40.43        \\
            \midrule \multicolumn{5}{c}{Under RegMix}                                                         \\
            \midrule K-Means                  & 20.07         & 26.30          & 58.20         & 34.86        \\
            TOS                               & 22.36         & 25.00          & 55.70         & 34.35        \\
            WebOrganizer Topic                & 21.38         & 25.60          & 54.50         & 33.83        \\
            WebOrganizer Format               & 21.21         & 25.30          & 55.30         & 33.94        \\
            GEM (Ours)                        & 21.21         & 27.50          & 57.20         & 35.30        \\
            \bottomrule
        \end{tabular}
    \end{minipage}
\end{table}

\begin{table}[h]
    \centering
    \small
    \caption{Results on Logic \& Linguistics benchmarks.}
    \label{tab:logic_details}
    \setlength{\tabcolsep}{10pt}
    \renewcommand{\arraystretch}{1.1}
    \begin{tabular}{l|ccc}
        \toprule \textbf{Model Variation} & \textbf{Wino} & \textbf{COPA} & \textbf{Avg} \\
        \midrule \multicolumn{4}{c}{Under DoReMi}                                        \\
        \midrule K-Means                  & 48.86         & 58.00         & 53.43        \\
        TOS                               & 49.96         & 58.00         & 53.98        \\
        WebOrganizer Topic                & 51.70         & 59.00         & 55.35        \\
        WebOrganizer Format               & 51.38         & 59.00         & 55.19        \\
        GEM (Ours)                        & 51.22         & 63.00         & 57.11        \\
        \midrule \multicolumn{4}{c}{Under Perf}                                          \\
        \midrule K-Means                  & 50.04         & 61.00         & 55.52        \\
        TOS                               & 49.57         & 57.00         & 53.29        \\
        WebOrganizer Topic                & 49.80         & 66.00         & 57.90        \\
        WebOrganizer Format               & 48.93         & 67.00         & 57.97        \\
        GEM (Ours)                        & 49.96         & 66.00         & 57.98        \\
        \midrule \multicolumn{4}{c}{Under RegMix}                                        \\
        \midrule K-Means                  & 52.33         & 59.00         & 55.67        \\
        TOS                               & 49.80         & 55.00         & 52.40        \\
        WebOrganizer Topic                & 50.99         & 54.00         & 52.50        \\
        WebOrganizer Format               & 50.51         & 58.00         & 54.26        \\
        GEM (Ours)                        & 51.93         & 58.00         & 54.97        \\
        \bottomrule
    \end{tabular}
\end{table}

\section{Details of implementation for Data Mixing Predictability}
\label{mixing_predic} We sample \textbf{256} mixture vectors $\{w_{j}\}_{j=1}^{256}$
from the simplex and, following the standard RegMix protocol, train one
\textbf{proxy model} (1M parameters) per mixture to obtain the corresponding
validation loss $\{L_{j}\}_{j=1}^{256}$. We then fit the RegMix regression model
on pairs $(w_{j}, L_{j})$ using an \textbf{80/20 split} over mixture points
(205 for training, 51 for testing). To assess predictability on the held-out
mixtures, we compute the \textbf{Spearman rank correlation} ($\rho$) between
the ground-truth losses and the RegMix-predicted losses on the test mixtures.
Higher $\rho$ indicates that the regression model preserves the ordering of losses
induced by mixture weights, suggesting that the taxonomy yields less noisy, less
collinear, and more disentangled mixing factors.


\section{Additional Results}
\subsection{Full Downstream Results}
\label{appendix:full_downstream_results}

We report detailed results on all nine benchmarks, categorized into three dimensions:
\textbf{Science QA} (Table~\ref{tab:science_qa_details}), \textbf{Commonsense
    Reasoning} (Table~\ref{tab:commonsense_details}), and \textbf{Logic \&
    Linguistics} (Table~\ref{tab:logic_details}).

\begin{figure}[t]
    \centering
    \includegraphics[width=0.6\linewidth]{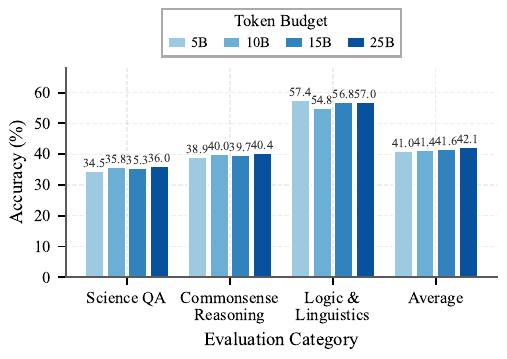}
    \caption{\textbf{Pilot Study on Downscaled Proxies.} The model exhibits a
        consistent and monotonic improvement in \textit{Average} performance (from
        41.05\% to 42.12\%), confirming the stability of our optimization trajectory
        at a reduced scale.}
    \label{fig:pilot_study}
\end{figure}

\subsection{Pilot Study on Downscaled Proxies.}
\

To efficiently validate our training configuration and investigate the early-stage
capability acquisition, we conduct a controlled pilot study using a
downscaled model setup across a range of 5B to 25B tokens. As illustrated in
Figure~\ref{fig:pilot_study}, the model exhibits a consistent and monotonic
improvement in \textit{Average} performance (from 41.05\% to 42.12\%),
confirming the stability of our optimization trajectory at a reduced scale. A
granular analysis reveals diverging learning dynamics across different task categories:
while \textit{Science QA} and \textit{Commonsense Reasoning} show steady
gains—suggesting a continuous integration of world knowledge—the performance
in \textit{Logic \& Linguistics} plateaus relatively early after 5B tokens.
This observation indicates that fundamental linguistic structures are
captured during the initial phase of pre-training, whereas complex reasoning
benefits more significantly from extended token exposure.

\section{Additional Robustness and Sensitivity Analyses}
\label{appendix:robustness_sensitivity}

\subsection{Robustness to AI-Generated Content Contamination}
\label{appendix:aigc_contamination}

As web corpora increasingly contain AI-generated content (AIGC), data curation
methods must remain robust to synthetic regions that may be unusually dense in
embedding space. Such density can amplify the cluster-collapse behavior of hard
Euclidean clustering, pulling otherwise distinct human-written documents into
AIGC-dominated clusters. GEM mitigates this failure mode through the
mixing-balance regularizer on the empirical mass $\boldsymbol{\pi}(\Gamma)$,
which penalizes artificially dominant regions while preserving angular semantic
structure. In our corpus, the RefinedWeb-style filtering pipeline removes most
low-quality or synthetic spam, and a Binoculars-based screening\citep{hans2024spotting} estimates the
proportion of likely AI-generated content to be below 0.5\%. As an additional
stress test, mixing natural web data with Cosmopedia, a synthetic web-style
corpus\citep{benallal2024cosmopedia}, caused K-Means to assign 37.5\% of synthetic data to two massive
clusters, whereas GEM maintained balanced semantic partitions without global
collapse.

\subsection{Sensitivity to the Balance Regularizer}
\label{appendix:lambda_sensitivity}

We study the contribution of the mixing-balance regularizer by varying
$\lambda$ while keeping the remaining experimental setting fixed. Since the
E-step assignment logits are dominated by vMF log-likelihood differences whose
scale is governed by learned concentrations around $\kappa\approx900$, we set
$\lambda=5000$ as the default value in the main experiments to align the
regularizer with this logit scale. Table~\ref{tab:lambda_sensitivity} shows
that vanilla vMF clustering without the balance term already provides a strong
hyperspherical baseline, while the full GEM objective achieves the best average
performance. The results are also stable across a broad range of nonzero
$\lambda$ values, suggesting that GEM does not depend on a finely tuned
regularization coefficient.

\begin{table}[h]
    \centering
    \small
    \caption{Ablation study isolating the contribution and sensitivity of the regularizer coefficient $\lambda$ under GEM + Perf.}
    \label{tab:lambda_sensitivity}
    \setlength{\tabcolsep}{10pt}
    \renewcommand{\arraystretch}{1.1}
    \begin{tabular}{lcccc}
        \toprule
        \textbf{Regularizer Coefficient} & \textbf{Science QA} & \textbf{Commonsense} & \textbf{Logic \& Linguistics} & \textbf{Average} \\
        \midrule
        $\lambda=0$ (Vanilla vMF)        & 34.55               & 40.21                & 56.27                         & 43.68            \\
        $\lambda=1000$                   & 35.19               & 40.21                & 55.70                         & 43.70            \\
        $\lambda=5000$ (Adopted)         & \textbf{35.96}      & \textbf{40.43}       & \textbf{57.98}                & \textbf{44.79}   \\
        $\lambda=10000$                  & 35.81               & 39.58                & 56.68                         & 44.05            \\
        \bottomrule
    \end{tabular}
\end{table}

\subsection{Sensitivity to Seed Corpus Size}
\label{appendix:seed_sensitivity}

The teacher phase runs GEM on a representative seed corpus
$\mathcal{X}_{\mathrm{seed}}$ before distilling the learned partitions into the
student classifier. To evaluate the data efficiency of this stage, we vary the
seed corpus from 1B to 10B tokens. As shown in Table~\ref{tab:seed_sensitivity},
GEM already induces a competitive taxonomy with 1B seed tokens, improves at the
3B-token default, and yields only marginal additional gains at 10B tokens. This
supports the use of a moderate seed corpus for efficient web-scale deployment.

\begin{table}[h]
    \centering
    \small
    \caption{Sensitivity analysis on the size of the seed corpus $\mathcal{X}_{\mathrm{seed}}$ under GEM + Perf.}
    \label{tab:seed_sensitivity}
    \setlength{\tabcolsep}{12pt}
    \renewcommand{\arraystretch}{1.1}
    \begin{tabular}{lcccc}
        \toprule
        \textbf{Seed Corpus Size} & \textbf{Science QA} & \textbf{Commonsense} & \textbf{Logic \& Linguistics} & \textbf{Average} \\
        \midrule
        1B Tokens                 & 35.35               & 39.77                & 57.89                         & 44.37            \\
        3B Tokens (Default)       & \textbf{35.96}      & \textbf{40.42}       & 57.98                         & 44.79            \\
        10B Tokens                & 35.60               & 40.12                & \textbf{58.80}                & \textbf{44.86}   \\
        \bottomrule
    \end{tabular}
\end{table}

\subsection{Learned vMF Concentrations}
\label{appendix:kappa_statistics}

To characterize the geometry learned by GEM, we report summary statistics of
the vMF concentration parameter $\kappa$ across different cluster granularities.
Larger $\kappa$ values correspond to tighter directional concentration. The
consistently high values in Table~\ref{tab:kappa_statistics} indicate that the
learned clusters are compact on the hypersphere, providing empirical support
for modeling the embedding space with directional distributions rather than
ambient Euclidean distances alone.

\begin{table}[h]
    \centering
    \small
    \caption{Statistics of the learned vMF concentration parameter $\kappa$ across different numbers of clusters.}
    \label{tab:kappa_statistics}
    \setlength{\tabcolsep}{10pt}
    \renewcommand{\arraystretch}{1.1}
    \begin{tabular}{lccccc}
        \toprule
        \textbf{Number of Clusters} & \textbf{Mean} & \textbf{Median} & \textbf{Min} & \textbf{Max} & \textbf{Std. Dev.} \\
        \midrule
        $K=12$                      & 881.19        & 884.61          & 854.07       & 904.58       & 14.49              \\
        $K=24$                      & 912.93        & 910.19          & 814.34       & 952.59       & 32.26              \\
        $K=36$                      & 931.79        & 934.83          & 822.89       & 965.92       & 28.48              \\
        $K=48$                      & 942.66        & 945.84          & 830.18       & 978.69       & 30.10              \\
        \bottomrule
    \end{tabular}
\end{table}

\section{Pseudocode of GEM}
\label{sec:pseudocode_gem}

Algorithm~\ref{alg:gem_clustering} summarizes the computational pipeline of
GEM, which maximizes the entropy-regularized variational objective in Eq.~\eqref{eq:gem_elbo_objective}
via an MM-based (minorize--maximize) alternating optimization scheme. The algorithm
starts with spherical initialization, then iteratively performs (i) an
\textit{E-step} that increases the MM surrogate $\widetilde{\mathcal{F}}_{t}$
in Eq.~\eqref{eq:mm_surrogate} (hence guaranteeing monotonic ascent of
$\mathcal{F}$; Theorem~\ref{thm:gem_monotone}), and (ii) an \textit{M-step}
that updates vMF parameters in closed form. After convergence, GEM
optionally selects representative samples using Geometric Influence Scores (GIS)
for taxonomy generation, and distills the learned partition into a lightweight
student classifier for web-scale deployment (details in Appendix).

\begin{algorithm}
    [t]
    \caption{GEM: Geometric Entropy Mixing via MM-Based vMF Clustering}
    \label{alg:gem_clustering}
    \begin{algorithmic}
        [1] \REQUIRE Embeddings $\mathcal{X}=\{x_{i}\}_{i=1}^{N}\subset \mathcal{S}
            ^{d-1}$, number of clusters $K$, balance strength $\lambda>0$,
        maximum iterations $T_{\max}$, tolerance $\varepsilon_{\mathrm{stop}}
            >0$, small constant $\varepsilon>0$ for numerical stability, optional:
        GIS neighborhood size $k_{\mathrm{nn}}$ and top-$M$ representatives
        per cluster. \ENSURE Soft assignments $\Gamma=\{\gamma_{ik}\}$,
        parameters $\Theta=\{(\mu_{k},\kappa_{k})\}_{k=1}^{K}$, (optional) representative
        sets $\{\mathcal{S}_{k}\}_{k=1}^{K}$.


        \vspace{2pt}
        \STATE \textbf{// Initialization} \STATE Initialize $\{\mu_{k}^{(0)}\}
            _{k=1}^{K}$ by spherical $k$-means on $\mathcal{X}$. \STATE
        Initialize $\kappa_{k}^{(0)}\leftarrow \kappa_{\mathrm{init}}\ge 0$
        for all $k$. \STATE Initialize responsibilities $\gamma_{ik}^{(0)}\leftarrow
            \frac{1}{K}$ for all $i,k$. \STATE Set fixed generative prior
        $\alpha_{k}\leftarrow \frac{1}{K}$ for all $k$.

        \vspace{2pt}
        \FOR{$t=0$ to $T_{\max}-1$} \STATE Compute empirical masses $\pi_{k}^{(t)}
            \leftarrow \frac{1}{N}\sum_{i=1}^{N}\gamma_{ik}^{(t)}$ for all $k$.
        \STATE Set
        $\boldsymbol{\pi}^{(t)}\leftarrow (\pi_{1}^{(t)},\dots,\pi_{K}^{(t)})$
        and $\mathbf{u}\leftarrow \tfrac{1}{K}\mathbf{1}$. \STATE Compute $\nabla
            R(\boldsymbol{\pi}^{(t)}) \leftarrow -\lambda(\boldsymbol{\pi}^{(t)}-
            \mathbf{u})$ \COMMENT{Eq.~\eqref{eq:regularizer_grad}}

        \vspace{2pt}
        \STATE \textbf{--- E-step (MM): increase the surrogate $\widetilde{\mathcal{F}}
                _{t}$ ---} \STATE Define $\widetilde{\mathcal{F}}_{t}(\Gamma)$ by Eq.~\eqref{eq:mm_surrogate}
        (using $\Theta^{(t)}$ and $\boldsymbol{\pi}^{(t)}$). \STATE Obtain $\Gamma
            ^{(t+1)}$ such that $\widetilde{\mathcal{F}}_{t}(\Gamma^{(t+1)}) \ge
            \widetilde{\mathcal{F}}_{t}(\Gamma^{(t)})$ \COMMENT{Eq.~\eqref{eq:estep_surrogate_increase}}
        \STATE \hspace{0.7cm} \textit{Implementation note:} maximize $\widetilde
            {\mathcal{F}}_{t}$ over $\{\gamma_{i}\in\Delta^{K-1}\}$ using a few steps
        of projected/mirror ascent; any update satisfying the inequality is
        valid.

        \vspace{2pt}
        \STATE \textbf{--- M-step: closed-form vMF updates ---} \FOR{$k=1$ to $K$}
        \STATE
        $r_{k}^{(t+1)}\leftarrow \sum_{i=1}^{N}\gamma_{ik}^{(t+1)}x_{i}$. \STATE
        $N_{k}^{(t+1)}\leftarrow \sum_{i=1}^{N}\gamma_{ik}^{(t+1)}$. \STATE
        $\mu_{k}^{(t+1)}\leftarrow \frac{r_{k}^{(t+1)}}{\|r_{k}^{(t+1)}\|_{2}+\varepsilon}$
        \COMMENT{Eq.~\eqref{eq:mu_update_new}} \STATE $\bar{R}_{k}^{(t+1)}\leftarrow
            \frac{\|r_{k}^{(t+1)}\|_{2}}{N_{k}^{(t+1)}+\varepsilon}$. \STATE
        $\kappa_{k}^{(t+1)}\leftarrow \frac{\bar{R}_{k}^{(t+1)}d - (\bar{R}_{k}^{(t+1)})^{3}}{1-(\bar{R}_{k}^{(t+1)})^{2}+
                \varepsilon}$
        \COMMENT{Eq.~\eqref{eq:kappa_update_new}} \ENDFOR

        \vspace{2pt}
        \STATE \textbf{--- Stopping criterion ---} \STATE Compute $\Delta \leftarrow
            \big|\mathcal{F}(\Theta^{(t+1)},\Gamma^{(t+1)})-\mathcal{F}(\Theta^{(t)}
            ,\Gamma^{(t)})\big|$ \COMMENT{Eq.~\eqref{eq:gem_elbo_objective}} \IF{$\Delta \le \varepsilon_{\mathrm{stop}}$}
        \STATE \textbf{break} \ENDIF \ENDFOR

        \vspace{2pt}
        \STATE \textbf{// Optional: GIS-based representative selection for
            interpretability (Appendix~\ref{appendix:gis_sampling})} \IF{GIS sampling is enabled}
        \FOR{$k=1$ to $K$} \STATE
        $\mathcal{I}_{k}\leftarrow \{i:\; \arg\max_{j}\gamma_{ij}= k\}$
        \COMMENT{hard assignment for sampling only} \FOR{each $i\in\mathcal{I}_{k}$}
        \STATE Compute a GIS score $\mathrm{GIS}(x_{i}; k)$ using
        $\gamma_{ik}$, $\mu_{k}$, $\kappa_{k}$, and $\rho_{\mathrm{local}}(x_{i}
            )$ (see Appendix~\ref{appendix:gis_sampling}). \ENDFOR \STATE
        $\mathcal{S}_{k}\leftarrow$ top-$M$ samples in $\mathcal{I}_{k}$
        with the largest GIS scores. \ENDFOR \ENDIF

        \vspace{2pt}
        \STATE \textbf{return} $\Gamma^{(t+1)}$, $\Theta^{(t+1)}$, and (if
        enabled) $\{\mathcal{S}_{k}\}_{k=1}^{K}$.
    \end{algorithmic}
\end{algorithm}

\end{document}